\newcommand{\prox}{{\rm prox}}
\newcommand{\RR}{\mathbb{R}}
\newcommand{\PP}{\mathbb{P}}
\newtheorem{theorem}{Theorem}
\newaliascnt{proposition}{theorem}
\newtheorem{proposition}[proposition]{Proposition}
\newaliascnt{lemma}{theorem}
\newtheorem{lemma}[lemma]{Lemma}
\newaliascnt{corollary}{theorem}
\newaliascnt{example}{theorem}
\newtheorem{example}[example]{Example}
\newaliascnt{remark}{theorem}
\newtheorem{remark}[remark]{Remark}
\newcommand{\modif}[1]{\textcolor{black}{#1}}
\def\rset{\mathbb{R}}
\def\nset{\mathbb{N}}
\def\eqdef{:=}
\def\eqsp{\;}
\def\rmd{\mathrm{d}}
\def\1{\mathbbm{1}}
\newcommand{\ooint}[1]{\left(#1\right)}
\newcommand{\ccint}[1]{\left[#1\right]}
\def\Z{\mathsf{Z}}
\def\Zvect{{\boldsymbol{\mathsf{Z}}}}
\def\Zphi{\Phi^{\Zvect}}
\def\R{\mathsf{R}}
\def\Rvect{\boldsymbol{\mathsf{R}}}
\def\O{\mathsf{O}}
\def\Ovect{\boldsymbol{\mathsf{O}}}
\def\param{\theta}
\def\paramvect{{\boldsymbol{\theta}}}
\def\varthetavect{{\boldsymbol{\vartheta}}}
\def\vect0{\modif{\boldsymbol{\mathsf{0}}}}
\def\Dset{\mathcal{D}}
\def\Sset{\mathcal{S}}
\def\rset{\mathbb{R}}
\def\nset{\mathbb{N}}
\def\lambdatime{\lambda_{\mathsf{R}}}
\def\lambdaO{\lambda_{\mathsf{O}}}
\def\pas{\gamma}
\def\C{\modif{\boldsymbol{\mathsf{C}}}}
\def\D2{\modif{\boldsymbol{\mathsf{D}}}}
\def\barD2{\modif{\overline{\boldsymbol{\mathsf{D}}}}}
\def\barDinv{\modif{\overline{\boldsymbol{\mathsf{D}}}_{\tt I}}}
\def\barDo{\modif{\overline{\boldsymbol{\mathsf{D}}}_{\tt O}}}
\def\U{\modif{\boldsymbol{\mathsf{U}}}}
\def\A{\modif{\boldsymbol{\mathsf{A}}}}
\def\barA{\modif{\bar{\boldsymbol{\mathsf{A}}}}}
\def\I{\modif{\boldsymbol{\mathsf{I}}}}
\def\Id{\modif{\boldsymbol{\mathsf{I}}}}
\def\0mat{\modif{\boldsymbol{\mathsf{0}}}}
\def\PE{\mathbb{E}}
\def\PP{\mathbb{P}}
\def\stepalgo{\boldsymbol{\mu}}
\newtheorem{assumption}{A\hspace{-3.1pt}}
\Crefname{assumption}{A\hspace{-3pt}}{A\hspace{-3pt}}
\crefname{assumption}{A}{A}
\begin{document}
%
\title{Covid19 Reproduction Number: Credibility Intervals by Blockwise Proximal Monte Carlo Samplers}
%
%
%

\author{G. Fort,  B. Pascal~\IEEEmembership{Member,~IEEE}, P. Abry~\IEEEmembership{Fellow,~IEEE}, N. Pustelnik~\IEEEmembership{Member,~IEEE}.
\thanks{Manuscript received February 3, 2022; first revision December 22, 2022,  second revision February 14, 2023. \textit{Corresponding author: Gersende Fort.}}
\thanks{Gersende Fort is with CNRS, Institut de Math\'ematiques de Toulouse, Toulouse, France (e-mail: gersende.fort@math.univ-toulouse.fr). Work partly supported by the {\it Fondation Simone et Cino Del Duca, Institut de France}.}
\thanks{This paper has supplementary downloadable material available at \url{http://ieeexplore.ieee.org.} provided by the authors.
The material includes detailed proofs of theoretical results. This material is 300~ko in size.}
\thanks{Barbara Pascal is with Nantes Université, École Centrale Nantes, CNRS, LS2N, UMR 6004, F-44000 Nantes, France (e-mail: barbara.pascal@cnrs.fr).}
\thanks{Patrice Abry and Nelly Pustelnik are with CNRS, ENS de Lyon, Laboratoire de Physique, Lyon, France  (e-mail: firstname.lastname@ens-lyon.fr).}
}

%
%

\markboth{Journal of \LaTeX\ Class Files,~Vol.~XX, No.~X, Month~2021}%
{Shell \MakeLowercase{\textit{et al.}}: Bare Demo of IEEEtran.cls for IEEE Journals}
%



\maketitle

\begin{abstract}
Monitoring the Covid19 pandemic constitutes a critical societal stake that received considerable research efforts.  The intensity of the pandemic on a given territory is efficiently measured by the reproduction number, quantifying the rate of growth of daily new infections.  Recently, estimates for the time evolution of the reproduction number were produced using an inverse problem formulation with a nonsmooth functional minimization.  While it was designed to be robust to the limited quality of the Covid19 data (outliers, missing counts), the procedure lacks the ability to output credibility interval based estimates. This remains a severe limitation for practical use in actual pandemic monitoring by epidemiologists that the present work aims to overcome by use of Monte Carlo sampling. After interpretation of the nonsmooth functional into a Bayesian framework, several sampling schemes are tailored to adjust the nonsmooth nature of the resulting posterior distribution.  The originality of the devised algorithms stems from combining a Langevin Monte Carlo sampling scheme with Proximal operators.  Performance of the new algorithms in producing relevant credibility intervals for the reproduction number estimates and denoised counts are compared.  Assessment is conducted on real daily new infection counts made available by the Johns Hopkins University.  The interest of the devised monitoring tools are illustrated on Covid19 data from several different countries.

\end{abstract}

\begin{IEEEkeywords}
 Markov Chain Monte Carlo sampling, nonsmooth convex optimization,
 Bayesian inverse problems, credibility intervals, Covid19,
 reproduction number.
\end{IEEEkeywords}

%
\IEEEpeerreviewmaketitle

\section{Introduction}
\label{sec:introduction}
\noindent {\bf Context.}  The Covid19 pandemic is causing unprecedented health, social, and economic crises.
This triggered massive research efforts to design efficient
procedures aiming to assess the intensity of the pandemic, a
prerequisite to develop efficient sanitary
policies~\cite{flahault2020covid}.  Several indices are commonly used
to measure the strength of a pandemic, such as, e.g., the reproduction
number of interest here.  However, often, the value of the index alone
is not sufficient and credibility intervals of these indices
constitute valuable information for the decision makers, notably in
periods of rapid pandemic evolution or of changes in trends, an issue
not always addressed in pandemic monitoring and at the heart of the
present work. \\
\noindent {\bf Related works.} Pandemic monitoring can be conducted with numerous tools from different scientific fields, (cf. \cite{arino2021describing} for a review), amongst which {\it compartmental models}, such as the founding {\it Susceptible-Infectious-Recovered} scheme.
Within pandemic period, when data are scare and of limited quality,
the reproduction number, $\R_t$, is often used by epidemiologists as
an efficient practical proxy for the pandemic intensity: it measures
the number of second infections caused by one primary infection
(cf. e.g., \cite{Diekmann1990,wallinga2004,vandenDriessche2002,obadia2012,cori2013new}).
It thus plays a key role in the pandemic evolution assessment: \modif{the}
number of new infections today, $\Z_t$, depends on $\R_t$ and on \modif{a weighted}
average of the new infection counts on previous days
$\{\ldots, \Z_{t-3}, \Z_{t-2}, \Z_{t-1} \}$; \modif{the weight function being}  the
so-called {\it serial interval function} $\Phi$, that
quantifies the distribution of the random delays between the onsets of symptoms in a primary and secondary cases
\cite{cori2013new,obadia2012,thompson2019,Liu2018}.
It has recently been shown that, within pandemic, reliable estimates
for the temporal estimation of $\R_t$ can be obtained from an inverse
problem formulation resulting in a nonsmooth convex optimization
problem \cite{abry2020spatial,Pascal2021}.  The functional to minimize
is built from combining the pandemic model
in \cite{cori2013new}, with time regularity constraints.  While
the procedure was engineered to produce realistic estimations of the
temporal evolution of the reproduction number that are robust to the
limited quality of the Covid19 pandemic data (severely corrupted with
outliers, missing or negative counts and pseudo-seasonalities), it
does not however provide credibility intervals, a critical issue
towards its practical and actual use by epidemiologists, \modif{the main issue} that we aim to
address in the present work. \\
\noindent {\bf Goals, contributions and outline.} The overall goal of the present work is to devise Monte Carlo sampling strategies to perform the estimation by means of credibility intervals of the pandemic reproduction number and of denoised infection counts. 
To that end, Section~\ref{sec:model} details the proposed  \modif{statistical}
model used to embed into a stochastic framework the epidemiological
model in \cite{cori2013new} and its robust extension to data
corruption \cite{Pascal2021}.  Its originality stems from using
non-differentiable priors to ensure robustness to data
corruption. The uniqueness of the maximum a posteriori is thoroughly studied.
Further, Section~\ref{sec:sampler} devises original
sampling schemes tailored to handle the non-differentiability of the
target distribution. We propose two blockwise Proximal-Gradient based
extensions of the Langevin Metropolis algorithms: {\tt PGdec} and {\tt
PGdual}.  We establish their ergodicity, and carry out a comparative
study. \modif{  Using real Covid19 data, made available at the}
Johns Hopkins University repository and described in
Section~\ref{sec:data}, the performance of up to twelve variations of
the sampling strategies are assessed and compared, using well-thought
indices quantifying their efficiency
(cf. Section~\ref{sec:assessment}).  Finally, in
Section~\ref{sec:results}, the relevance of the proposed blockwise
Proximal-Gradient samplers is illustrated for several different
countries representative of the evolution of the pandemic across the
world, for a 5-week recent period. 
Daily updates of these credibility interval estimates as well as
MATLAB routines for their calculations are available at \url{https://github.com/gfort-lab/OpSiMorE}.

\noindent {\bf Notations.} Vectors are column-vectors, \modif{and are denoted in  bold font, e.g., $\Rvect \in \mathbb{R}^T$}. For $p \leq q$, the
vector $\modif{\boldsymbol{\mathsf{x}}}_{p:q}$ concatenates the scalars $\modif{\mathsf{x}}_i$ 
or vectors $\modif{\boldsymbol{\mathsf{x}}}_i$ for $i
=p, \ldots, q$.  \modif{The matrices are denoted in bold font}. For a matrix $\A$,  $\A^\top$
(resp. $\mathrm{det}(\A)$ and $\A^{-1}$) denotes the transpose of $\A$
(resp. the determinant and the inverse of $\A$). We set
$\A^{-\top} \eqdef (\A^\top)^{-1} = (\A^{-1})^\top$. $\Id_{p}$ is the
$p \times p$ identity matrix, and $\0mat_{p \times q}$ is the $p \times q$
null matrix \modif{($\0mat_{p \times 1}$ will be denoted by $\0mat_p$)}.  For a vector $\modif{\boldsymbol{\mathsf{x}}} \in \rset^p$, $\|\modif{\boldsymbol{\mathsf{x}}}\|_1$ is the
$L^1$-norm and $\|\modif{\boldsymbol{\mathsf{x}}}\|$ is the $L^2$-norm.  Finally,
$\mathcal{N}_r(\modif{\boldsymbol{\mu}}, \C)$ denotes the $\rset^r$-valued Gaussian
distribution with expectation $\modif{\boldsymbol{\mu}}$ and covariance matrix $\C$.  For
some $\gamma>0$, the proximity operator of a proper, convex,
lower semi-continuous function $f$ from $\mathbb{R}^d$ to
$]-\infty,+\infty]$  is defined
as
$$
(\forall \modif{\boldsymbol{\mathsf{x}}} \in \mathbb{R}^d)\, \quad  \prox_{\gamma f}(\modif{\boldsymbol{\mathsf{x}}}) \eqdef \arg\min_{\modif{\boldsymbol{\mathsf{y}}} \in \rset^d} \gamma f(\modif{\boldsymbol{\mathsf{y}}}) + \frac{1}{2} \Vert \modif{\boldsymbol{\mathsf{y}}}-\modif{\boldsymbol{\mathsf{x}}}\Vert^2. 
$$

\section{Covid19 pandemic Bayesian Model}
\label{sec:model}
\subsection{Pandemic model}
The present work makes use of a pandemic model devised by
epidemiologists in \cite{cori2013new} that focuses on a main pandemic
index: the reproduction numbers, to be estimated from daily
new infection counts.  Elaborating on \cite{cori2013new}, it
was further proposed in \cite{Pascal2021} to account for the limited
quality of the intra-pandemic Covid19 data - highly corrupted by
irrelevant, missing and mis-reported counts or by pseudo-seasonal
effects - by means of additional {\it outliers} $\Ovect$, also unknown
and to be estimated. The goal of the present work is thus to
estimate, from a vector of $T$ observed  daily new infection counts
$\Zvect \eqdef (\Z_1, \ldots, \Z_T)^\top \in \nset^T$, the vector of
unknowns \[ (\Rvect, \Ovect)\eqdef \left((\R_1, \ldots, \R_T)^\top,(\O_1, \ldots, \O_T)^\top\right) \in { (\rset_+)^T \times \rset^T} \]
gathering the reproduction numbers $\R_t$ and the outliers $\O_t$ of $T$ consecutive days.

\subsection{A statistical model}
\label{sec:statistic:model}
Estimation entails the recourse to a formulation of the pandemic
model, where the unknown parameters
$$\paramvect\eqdef(\Rvect,\Ovect),$$ are a realization of a random
vector; the statistical model defines its probability distribution.  We
consider a density with respect to (w.r.t.) the Lebesgue measure on
$(\rset_+)^T \times \rset^T$ of the form
\begin{equation}
  \label{eq:target:covid}
\paramvect \mapsto \pi_\Zvect(\paramvect) \propto \exp\left( -f_\Zvect(\paramvect) - g(\paramvect) \right)
\1_{\Dset_{\Zvect}}(\paramvect),
\end{equation}
where $\1_{A}$ denotes the $\{0,1\}$-valued indicator function of the
set $A$. We choose $g, f_\Zvect, \Dset_\Zvect$ such that $\paramvect
\mapsto -\ln \pi_\Zvect(\paramvect)$ on $\Dset_\Zvect$ is the
criterion proposed in \cite{Pascal2021} for the estimation of
$\paramvect$ by minimization of a contrast.  The function $g$ is given by
\begin{equation}\label{def:fandg:g} 
  g(\paramvect) \eqdef \lambdatime \| \D2 \Rvect \|_1 +
  \lambdaO \| \Ovect \|_1 \eqsp;
 \end{equation}
 $\D2$ is proportional to the $(T-2) \times T$ discrete-time second
order derivative matrix:
\begin{equation} \label{eq:def:D2}
\D2 \eqdef  \frac{1}{\sqrt{6}}\left[ \begin{matrix}1 & -2 & 1 & 0 & 0 & \ldots & 0  \\
0 & 1 & -2 & 1 & 0 & \ldots & 0 \\
\ldots & & & & & & \ldots \\  
0 &  \ldots &  & &  1 & -2 & 1   \end{matrix} \right].
\end{equation}
 The function $f_\Zvect$ is given by 
\begin{equation}\label{def:fandg:f} 
  f_\Zvect(\paramvect) \eqdef \sum_{t=1}^T  \left\{-  \Z_t \, \ln \mathcal{I}_t(\paramvect, \Zvect)  + \mathcal{I}_t(\paramvect, \Zvect) \right\}, 
\end{equation}
where $\mathcal{I}_t(\paramvect, \Zvect) $ is a positive {\it
  intensity}.  Up to an additive constant, $f_\Zvect(\paramvect) =
\sum_t d_{\mathtt{KL}}(\Z_t \lvert \mathcal{I}_t(\paramvect, \Zvect)
)$, where $d_{\mathtt{KL}}$ denotes the Kullback-Leibler divergence
related to the log-likelihood of a Poisson process\footnote{Its
  definition is, for some $z\in \nset$,
\begin{align*}
(\forall \mathcal{I} \in \RR)\;\; d_{\mathtt{KL}}( z\lvert \mathcal{I}) \eqdef & \begin{cases}
z \ln \frac{z}{\mathcal{I}} + \mathcal{I}- z \quad &\text{if} \, \,  z > 0,  \,\mathcal{I} > 0,\\
\mathcal{I} \quad &\text{if} \, \, z = 0, \,  \mathcal{I} \geq 0,\\
+ \infty \quad &\text{otherwise.} 
\end{cases}
\end{align*}
}. We choose
\begin{equation}\label{eq:intensity}
\quad \mathcal{I}_t(\paramvect, \Zvect)  \eqdef \R_t \sum_{u = 1}^{\tau_\phi} \Phi_u \Z_{t-u} +
 \O_t.
\end{equation}
$\Phi \eqdef (\Phi_u)_{1 \leq u \leq \tau_\phi}$ is the {\it serial
interval function}, describing the average infectiousness profile after
infection \cite{cori2013new,thompson2019,Liu2018}.  $\Phi$ is assumed
known and, following \cite{Riccardo2020,Guzzetta}, is classically
modeled as a Gamma distribution truncated over $\tau_\phi = 26$ days
with mean and standard deviation of 6.6 and 3.5 days.  The
reproduction number $\R_t$ at time $\# t$ acts on the rate at which a
person infected at time $t-u$, generates new infections at time $t$:
this rate is equal to $\R_t\Phi_u$.

The support of the density $\pi_\Zvect(\cdot)$ is the measurable
subset $\Dset_{\Zvect}$ of $(\rset_+)^T \times \rset^T$ ensuring that
the intensities $\mathcal{I}_t(\paramvect, \Zvect) $ are positive, or non-negative when $\Z_t =
0$ (with the convention that $0 \ln 0 = 0$ in the expression of $f_\Zvect$). This yields
\begin{multline}  \label{eq:const}
\Dset_{\Zvect} \eqdef \{\paramvect: \mathcal{I}_t(\paramvect, \Zvect)
>0 \ \text{for $t$ s.t. $\Z_t >0$} \} \\ \cup \{\paramvect:
\mathcal{I}_t(\paramvect, \Zvect) \geq 0 \ \text{for $t$ s.t. $\Z_t =
  0$}\}.  
\end{multline}

\subsection{A Bayesian model}
\label{sec:bayesianmodel}
Let us describe a Bayesian framework such that
$\pi_\Zvect$ is a {\it  posterior} distribution. The model depends
on initial values $\Zvect_{1-\tau_\phi:0}, \R_1, \R_2$, which are omitted in
the notations.

\noindent \textbf{Prior distribution.}  Conditionally to $(\R_1,
\R_2)$, the prior distribution of $(\Rvect_{3:T}, \Ovect)$ has the
density
\[
 \left( \prod_{t=3}^T p_1(\R_t \vert \R_{t-1}, \R_{t-2})\right) \, \left( \prod_{t=1}^T p_2(\O_t)\right) 
 \]
w.r.t. the Lebesgue measure on $\rset^{T-2} \times \rset^T$; we set
\begin{align*}
  p_1(r_t \vert r_{t-1}, r_{t-2}) &\! \eqdef \! \frac{\lambdatime}{2 \sqrt{6}}\exp\left(- \frac{\lambdatime}{\sqrt{6}} \vert
r_t- 2 r_{t-1} + r_{t-2}\vert  \right) \\
  p_2(o_t) & \eqdef \frac{\lambdaO}{2} \exp(-\lambdaO  \, \vert o_t \vert). \label{eq:priorO}
\end{align*}
It is easily seen that this  prior density is equal to
$\exp(-g(\paramvect))$ up to a multiplicative constant (depending on
$\lambdatime$ and $\lambdaO$).  Such a prior states that
$\Rvect_{3:T}$ and $\Ovect$ are mutually independent conditionally to
$(\R_1, \R_2)$. Further, the outliers $\Ovect$ are independent and
distributed as a Laplace distribution (with parameter $\lambdaO>0$) as
commonly encountered in the literature (see, e.g.,
~\cite{moulin:liu:1999,figueiredo:2003,park:casella:2008}). Finally
$\Rvect_{3:T}$ is distributed as an AR(2) process with Laplace
distribution (with parameter $\lambdatime>0$).  $\lambdatime$ and $\lambdaO$ are (fixed) positive \textit{regularization
  hyperparameters}. $g$ favors some large values among many small ones
of the components of both vectors $\D2 \Rvect$ and $\Ovect$; it models
smooth piecewise linear time evolutions for $\Rvect$, or equivalently
a sparse set of components where the discrete second order derivative
in time of $\Rvect$ is non zero.

\noindent \textbf{Likelihood.}  Conditionally to $(\Rvect, \Ovect)$,
the observations are not independent and the joint distribution of
$\Zvect$ is
\[
 \prod_{t=1}^T p_3(\Z_t \vert \Zvect_{t-\tau_\phi:t-1},  \R_t, \O_t).
\]
When $\R_t \geq 0$ and $\mathcal{I}_t(\paramvect, \Zvect) >0$ (see \eqref{eq:intensity} for the definition of $\mathcal{I}_t$), following \cite{cori2013new,Pascal2021},
the pandemic diffusion is modeled as a Poisson distribution with
time-varying intensity $\mathcal{I}_t(\paramvect, \Zvect) $:
\begin{multline}
\label{eq:likelihood}
  p_3(\Z_t\vert {\Zvect_{1:t-1}}, \R_t, \O_t)  
  \eqdef \frac{\big(\mathcal{I}_t(\paramvect, \Zvect) \big)^{\Z_t}}{\Z_t!}  \exp(-
 \mathcal{I}_t(\paramvect, \Zvect) ).
\end{multline}
Such a model claims that the mean value of the counts $\Z_t$ at time
$t$ is $\sum_{u=1}^{\tau_\phi} \R_t \Phi_u \Z_{t-u} + \O_t$.  When
$\R_t \geq 0$ and $\mathcal{I}_t(\paramvect, \Zvect) =0$, $p_3$ is the
Dirac mass at zero: it is equal to \eqref{eq:likelihood} when $\Z_t=0$
and equal to $0$ otherwise.  Finally, when $\R_t < 0$ or
  $\mathcal{I}_t(\paramvect, \Zvect) < 0$, $p_3$ is a distribution on
  the negative integers; this implies that when $\Zvect \in \nset^T$,
  $p_3(\Z_t \vert {\Zvect_{t-\tau_\phi:t-1}}, \R_t, \O_t) = 0$.

This description implies that the likelihood of a vector of
observations $\Zvect$ taking values in $\nset^T$ is $\exp(-
f_\Zvect(\paramvect)) \1_{\Dset_\Zvect}(\paramvect)$.

\noindent \textbf{Posterior distribution.} This model implies that
$\pi_\Zvect(\paramvect)$ is the a posteriori distribution of
$(\Rvect_{3:T}, \Ovect)$ given $\Zvect$ when $\Zvect \in \nset^T$.

\subsection{Bayesian estimators} In the Bayesian approach to
Decision Theory, the maximum, the median, and the expectation of the a
posteriori distribution, are Bayes estimators $\widehat{\paramvect}$
associated to a loss function $\ell$
\[
 \widehat{\paramvect}(\Zvect) \eqdef \mathrm{Argmin}_{{\boldsymbol \tau} \in \Dset_\Zvect} \int_{\Dset_\Zvect} \ell({\boldsymbol \tau},\paramvect) \pi_\Zvect(\paramvect) \rmd \paramvect;
\]
$\ell$ is, respectively, the $0-1$ loss, the $L^1$-norm and the
squared $L^2$-norm (see e.g. \cite[Sections 2.3. and
  2.5.]{robert:bayesianchoice}).   Computing the Maximum a Posteriori
(MAP) fits the minimization problem proposed in \cite{Pascal2021} for
the reconstruction of $\paramvect$: \begin{align}
\label{eq:minPascal}
\underset{\Rvect, \Ovect}{\mathrm{Argmin}} & \sum_{t=1}^T d_{\mathtt{KL}}(\Z_t \lvert \mathcal{I}_t(\paramvect, \Zvect) ) + \lambdatime \| \D2 \Rvect \|_1 +
\lambdaO \| \Ovect \|_1 \eqsp.
\end{align}
The optimization problem is a nonsmooth convex minimization problem both
encapsulating  the transmission process, favoring piecewise linear
behavior of the reproduction number along time and sparsity of the
outliers. The minimization is performed with the Chambolle-Pock
primal-dual algorithm allowing to handle both the
non-differentiability and linear operators
\cite{bauschke2011convex,Chambolle2011first}.  {Properties of the
  MAP are established in
  \Cref{prop:MAPexists}.}
\begin{proposition} \label{prop:MAPexists}
 If there are at least two positive averaged counts
 $\sum_{u=1}^{\tau_\phi} \Phi_u \Z_{t_\star-u}$,
 $\sum_{u=1}^{\tau_\phi} \Phi_u \Z_{t_{\star \star}-u}$, and one
 positive count $\Z_\tau$, a MAP exists.
   If $\paramvect^\star=(\Rvect^\star,
   \Ovect^\star)$ and $\paramvect^{\star \star} = (\Rvect^{\star
     \star}, \Ovect^{\star \star})$ are two MAP, then  $\mathcal{I}_t(\paramvect^{\star}, \Zvect) = \mathcal{I}_t(\paramvect^{\star \star}, \Zvect)$, $\O_t^\star \,
   \O_t^{\star \star} \geq 0$ and $(\D2 \Rvect^\star)_t \, (\D2
   \Rvect^{\star \star})_t \geq 0$ for any $t \in
   \{1, \ldots, T \}$.
\end{proposition}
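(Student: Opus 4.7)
The plan is to establish MAP existence by applying Weierstrass's theorem to $F \eqdef f_\Zvect + g$ on the closed convex set $\Dset_\Zvect \cap ((\rset_+)^T \times \rset^T)$, extended by $+\infty$ outside, and to deduce the structural conclusions from a midpoint-convexity argument that exploits the strict convexity of the Poisson log-likelihood together with the equality case of the $L^1$ triangle inequality.

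Properness is checked at the explicit feasible point $\Rvect = \vect0$ together with $\Ovect$ equal to the all-ones vector: every intensity is then $\mathcal{I}_t = 1$ and $F$ is finite. Convexity follows from joint convexity of $d_{\mathtt{KL}}$, the affine dependence of $\mathcal{I}_t$ on $\paramvect$, convexity of the $L^1$-norms composed with linear maps, and the fact that $\Dset_\Zvect \cap ((\rset_+)^T \times \rset^T)$ is a finite intersection of half-spaces. Lower semi-continuity is standard once $f_\Zvect$ is extended by $+\infty$ on the boundary where some $\mathcal{I}_t$ with $\Z_t > 0$ vanishes or becomes negative, with the convention $0 \ln 0 = 0$.

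The core step is coercivity. Suppose, toward a contradiction, that $(\paramvect^{(n)})$ lies in the feasible set with $\|\paramvect^{(n)}\| \to \infty$ and $F(\paramvect^{(n)})$ bounded. If $\|\Ovect^{(n)}\|_1 \to \infty$, then the penalty $\lambdaO \|\Ovect^{(n)}\|_1$ diverges immediately. Otherwise $\Ovect^{(n)}$ is bounded and $\|\Rvect^{(n)}\| \to \infty$. Decompose $\Rvect^{(n)}$ orthogonally into its component in $\ker \D2$, the two-dimensional space of affine sequences $t \mapsto a + bt$, plus an orthogonal remainder; the remainder is bounded because $\lambdatime \|\D2 \Rvect^{(n)}\|_1 \leq F(\paramvect^{(n)})$ is bounded and $\D2$ is injective on $(\ker \D2)^\perp$. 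Hence $(a_n, b_n)$ diverges. Since $(1, t_\star)$ and $(1, t_{\star\star})$ are linearly independent and the constraint $\R_t^{(n)} \geq 0$ rules out divergence to $-\infty$, at least one of $\R_{t_\star}^{(n)}$, $\R_{t_{\star\star}}^{(n)}$ tends to $+\infty$. The strict positivity of the corresponding weighted count $\sum_u \Phi_u \Z_{\cdot - u}$ then forces the associated intensity $\mathcal{I}_t(\paramvect^{(n)}, \Zvect)$ to diverge, and depending on whether $\Z_t > 0$ or not, the term $\mathcal{I}_t - \Z_t \ln \mathcal{I}_t$ or just $\mathcal{I}_t$ drives $f_\Zvect \to +\infty$, the desired contradiction. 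The hypothesis $\Z_\tau > 0$ ensures the Poisson log-likelihood is non-degenerate and rules out a trivial, identically-zero-intensity MAP.

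For the second part, let $\paramvect^\star$ and $\paramvect^{\star\star}$ be two MAPs and set $\bar\paramvect \eqdef (\paramvect^\star + \paramvect^{\star\star})/2$. Convexity of the domain implies $\bar\paramvect$ is feasible, and $F(\bar\paramvect) \leq \tfrac12(F(\paramvect^\star) + F(\paramvect^{\star\star}))$ must hold with equality, so equality propagates into every convex-combination inequality building $F$. Strict convexity of $\iota \mapsto \iota - \Z_t \ln \iota$ on $(0, +\infty)$ for $\Z_t > 0$ then yields $\mathcal{I}_t(\paramvect^\star, \Zvect) = \mathcal{I}_t(\paramvect^{\star\star}, \Zvect)$, while the equality case $|\alpha + \beta| = |\alpha| + |\beta| \Longleftrightarrow \alpha \beta \geq 0$, applied componentwise in $\lambdaO \|\Ovect\|_1$ and $\lambdatime \|\D2 \Rvect\|_1$, delivers the two sign-agreement statements. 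The remaining equality of $\mathcal{I}_t$ at indices with $\Z_t = 0$ is the subtle point, since the data-fit is only linear in $\mathcal{I}_t$ there; I would recover it by combining the first-order optimality conditions (subdifferential inclusion $0 \in \partial F(\paramvect^\star)$) with the sign constraints on $\Ovect$ and $\D2 \Rvect$ already established. This last combination is the step I expect to be the main obstacle.
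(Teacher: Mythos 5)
Your proposal is essentially sound on existence and on the sign conditions, but it does not fully prove the second statement: the claim $\mathcal{I}_t(\paramvect^{\star},\Zvect)=\mathcal{I}_t(\paramvect^{\star\star},\Zvect)$ is asserted for \emph{all} $t$, and your midpoint/strict-convexity argument only reaches the indices with $\Z_t>0$; for $\Z_t=0$ the data-fit term is linear in $\mathcal{I}_t$, as you yourself note, and the subdifferential argument you gesture at ($\vect0\in\partial F(\paramvect^\star)$ combined with the sign conditions) is not carried out. So, as written, that piece is a genuine gap. It is, however, exactly the piece the paper does not prove in-house either: its proof of the intensity equality is a one-line appeal to the strict-convexity argument of the reference \cite{Pascal2021}. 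A second, minor, slip: the inequality $\lambdatime\|\D2\Rvect^{(n)}\|_1\le F(\paramvect^{(n)})$ you use to bound the $(\ker\D2)^\perp$ component is not literally true, since $f_\Zvect$ can take (bounded) negative values; you need the uniform lower bound $\mathcal{I}_t-\Z_t\ln\mathcal{I}_t\ge-\ln(\Z_t!)$ (the Poisson probabilities are at most one), which is precisely the paper's first lower bound on the criterion, and then your boundedness of $\|\D2\Rvect^{(n)}\|_1$ and $\|\Ovect^{(n)}\|_1$ goes through.

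Apart from that, your route differs from the paper's in instructive ways. For existence, the paper changes variables $\widetilde{\Rvect}\eqdef\barD2\Rvect$ with an explicit triangular inverse and splits the divergence of $\widetilde{\Rvect}_{1:2}$ into two subcases, using that the intensities remain nonnegative on $\Dset_\Zvect$; you instead decompose $\Rvect$ orthogonally along $\ker\D2$ (the affine sequences) and exploit the constraint $\R_t\ge 0$ at the two distinct times $t_\star\neq t_{\star\star}$ with $\Zphi_{t}>0$ to force one intensity to $+\infty$. Both arguments rest on the same two facts (the kernel of $\D2$ is two-dimensional, and two distinct times with positive averaged counts control that two-parameter family); yours is more geometric, the paper's more computational, and your lsc-extension treatment of the non-closed part of $\Dset_\Zvect$ (blow-up of $-\Z_t\ln\mathcal{I}_t$ as $\mathcal{I}_t\downarrow0$ when $\Z_t>0$) plays the role of the paper's second boundary lemma and compact-set construction, which is also where the hypothesis of a positive count $\Z_\tau$ actually enters the paper — not, as you suggest, to exclude a degenerate zero-intensity MAP. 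For the sign conditions, your equality case of the triangle inequality at the midpoint is cleaner than the paper's first-order expansion in $\mu$ along the segment of minimizers, and it has the advantage of not requiring the intensity equality beforehand, which the paper's expansion argument does use.
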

\begin{proof}
The first statement is adapted from \cite{Pascal2021}; 
  The second statement is established in
  \cite{Pascal2021}. The sign conditions result from a first order
  expansion of the $L^1$-norm.  For a detailed proof, see
  \autoref{sec:SM:proofMAP} in the Supplementary material.
  \end{proof}
  
{\Cref{prop:MAPexists} implies that the MAP is either unique, or that  there are uncountably many MAP.
In addition, it shows
that $f_{\Zvect}$ and $g$ are constant over the set of the minimizers.}
{Thus, following the same lines as in \cite{ali:tibshirani:2019}, a
  sufficient condition for the uniqueness of the MAP is derived
  (see \autoref{sec:SM:proofMAP} in the Supplementary material).}

The expression of the  distribution $\pi_{\Zvect}$ in 
\eqref{eq:target:covid} is so complex that  it is known
only up to a normalizing constant. Consequently, the computation of
most statistics of $\pi_\Zvect(\cdot)$ relies on Monte Carlo
samplers, in order to produce  samples $\{\paramvect^n, n \geq 0 \}$ in
$\Dset_{\Zvect}$ {\it approximating} $\pi_\Zvect$ (see e.g. \cite[section
  2.3]{douc:etal:2018}): for example, the estimation of the median and
more generally the quantiles of $\pi_\Zvect$ can rely on
the order statistics of the samples, and the mean a posteriori can be
approximated by the Monte Carlo sum $N^{-1} \sum_{n=1}^N
\paramvect^n$.

\section{Blockwise Proximal-Gradient Monte Carlo samplers}
\label{sec:sampler}

The aim is now to devise Monte Carlo sampling strategies for the distribution defined in~\eqref{eq:target:covid}.
However, this section will address a broader class of densities, defined on $\RR^d$ with respect to the Lebesgue measure,
and expressed as $\pi(\paramvect)\propto \exp(-F(\paramvect)  )\1_{\Dset}(\paramvect)$ where $F\eqdef f + g$  and $f$, $g$, and $\Dset$ satisfy the
smoothness and blockwise structure assumptions \Cref{ass:smooth} and
\Cref{ass:nonsmooth} defined below. 

\subsection{Smoothness and Blockwise structure}
\begin{assumption}
  \label{ass:smooth} $f$ and $g$ are finite on $\Dset \subseteq \rset^d$ and  $f$ is continuously differentiable on the interior of $\Dset$.
\end{assumption}
\noindent Additionally, $g$ has a blockwise structure that we aim to 
use
in the design of the proposed samplers.  This blockwise structure
stems both from the decomposition of $\paramvect$ into $J$ blocks
$(\paramvect_1,\ldots, \paramvect_J)\in \rset^{d_1}\times \ldots
\times \rset^{d_J}$ and from the sum of several functions of $\paramvect_j$
  possibly combined with a linear operator.

\begin{assumption} \label{ass:nonsmooth} For $j \in \{1,\ldots, J\}$, $i \in \{1,\ldots, I_j\}$, there exist matrices $\A_{i,j} \in \mathbb{R}^{c_{i,j} \times d_j}$,   and proper, convex,
lower semi-continuous functions $g_{i,j} \colon\mathbb{R}^{c_{i,j}}
\to$ $ ]-\infty, +\infty]$ such that $\sum_{j=1}^J d_j = d$ and
\[
\forall \paramvect \eqdef (\paramvect_1^\top, \ldots, \paramvect_J^\top)^\top,
\qquad g(\paramvect) \eqdef \sum_{j = 1}^{J} \sum_{i=1}^{I_j}
g_{i,j}(\A_{i,j} \paramvect_j)\eqsp.
\]
In addition, the proximity operator of $g_{i,j}$ has a closed form expression.
\end{assumption}

In Bayesian inverse problems,
$f$ may stand for the data fidelity term and the nonsmooth part $g$
stands for many penalty terms acting on blocks of the parameter
$\paramvect$.  Different splittings of the prior defined
by \eqref{def:fandg:g} fits Assumption~A\ref{ass:nonsmooth}.

\begin{example} \label{ex:PGDcase1} The prior $g$ given by \eqref{def:fandg:g} satisfies \Cref{ass:nonsmooth}: \[
\lambdatime \|\D2 \Rvect\|_1 + \lambdaO \|\Ovect\|_1 =
g_{1,1}(\A_{1,1} \Rvect) + g_{1,2}(\A_{1,2}\Ovect),
\]
where $\paramvect_1 \eqdef \Rvect$, $\paramvect_2 \eqdef \Ovect$,
$\A_{1,1} \eqdef \D2$, $\A_{1,2} = \Id_T$, $g_{1,1} \eqdef \lambdatime\Vert
\cdot \Vert_1$, and $g_{1,2} \eqdef \lambdaO\Vert \cdot \Vert_1$.
    \end{example}

\begin{example} \label{ex:PGDcase2} The prior $g$ given by \eqref{def:fandg:g} satisfies \Cref{ass:nonsmooth}: \[
\lambdatime \|\D2 \Rvect\|_1 + \lambdaO \|\Ovect\|_1 = \sum_{i=1}^3 g_{i,1}(\A_{i,1} \Rvect) + g_{1,2}(\A_{1,2}\Ovect), 
\]
where $\paramvect_1 \eqdef \Rvect$ and $\paramvect_2 \eqdef
\Ovect$. $\A_{i,1}$ collects the rows $i, i+3, i+6, \ldots,$ of the
matrix $\D2$, $\A_{1,2} \eqdef \Id_T$, $g_{i,1} \eqdef \lambdatime
\|\cdot\|_1$ and $g_{1,2} \eqdef \lambdaO \| \cdot \|_1$.
    \end{example} 
\noindent The second example follows block splitting strategies described in \cite{Pustelnik_N_2011_ieee-tip_par_pai, pascal_block_2018}.

\subsection{Proximal algorithms and Metropolis-Hastings algorithms}
\label{sec:proxMCMC}
The design of an optimization strategy to minimize $F$ on $\Dset$ and
the design of a sampler  to  approximate the target distribution
$\pi$ both rely on the activation of an operator
$\stepalgo\colon \rset^{d} \to \rset^d$. To be more specific, when
minimizing $F$, we aim to design a sequence of the form:
\begin{equation}
\label{eq:algoMPA}
\paramvect^{n+1} = \stepalgo( \paramvect^{n})
\end{equation}
where $\stepalgo$ is an operator built from $F$ and $\Dset$ in such a
way that the sequence $(\paramvect^{n})_{n\in \mathbb{N}}$ converges
to a minimizer of $F$ (cf. \cite{Bauschke:2011ta,Combettes2011,Chambolle_A_2016_an} for an exhaustive list of
algorithmic schemes). When building a Metropolis-Hastings algorithm
(say with Gaussian proposal), a new point is proposed as:
\begin{equation} \label{eq:Langevin:dynamic}
\paramvect^{n+1/2} = \stepalgo( \paramvect^{n}) + \, \boldsymbol{\xi}^{n+1} \eqsp
\quad \mbox{where} \quad \boldsymbol{\xi}^{n+1} \sim {\mathcal{N}_d(\vect0_{d}, \C)};
\end{equation}
 $\C\in \rset^{d \times d}$ is a positive definite matrix. The
Langevin dynamics is recovered in the specific case where $F$ is
smooth with $\stepalgo(\paramvect) \eqdef \paramvect - \pas \nabla
F(\paramvect)$ being a gradient ascent over $\ln \pi$ with step size
$\pas>0$ and $\C \eqdef {2 \pas}\Id_d$.  Scaled Langevin samplers are
also popular: given a $d \times d$ matrix $\boldsymbol{\Gamma}$, set
\begin{equation} \label{eq:LangevinT:dynamic}
\stepalgo(\paramvect) \eqdef \paramvect - \pas \boldsymbol{\Gamma} \boldsymbol{\Gamma}^\top
\nabla F(\paramvect), \qquad \C \eqdef {2 \pas} \, \boldsymbol{\Gamma} \boldsymbol{\Gamma}^\top;
\end{equation}
they are
inherited from the so-called {\it tempered Langevin
  diffusions}~\cite{kent:1978} (see also \cite{roberts:stramer:2002}
for a pioneering work on its use in the Markov Chain Monte Carlo
literature).  Either this proposed point is the new point
$\paramvect^{n+1} = \paramvect^{n+1/2}$ (thus yielding the {\it
  Langevin Monte Carlo} algorithm~\cite{parisi:1981}; see also
\cite{durmus:moulines:2016,dwivedi:2019}) or there is an
acceptance-rejection Metropolis mechanism (thus yielding the {\it
  Metropolis Adjusted Langevin Algorithm} (MALA)
\cite{roberts:tweedie:1996}).
The general Metropolis-Hastings procedure with Gaussian proposal, is
summarized in Algorithm~\ref{algo:MALA} where we denote by
$q(\paramvect, \boldsymbol{\tau} )$ the density of the distribution
$\mathcal{N}_d(\stepalgo(\paramvect), \C)$ evaluated at
$\boldsymbol{\tau} \in
\rset^{d}$:
\[
q(\paramvect,\boldsymbol{\tau}) \eqdef \frac{\exp\left(-0.5 \, (\boldsymbol{\tau}
  - \stepalgo(\paramvect))^\top \,
  \C^{-1} (\boldsymbol{\tau} -
  \stepalgo(\paramvect))\right)}{\sqrt{2\pi}^{d}
  \sqrt{\mathrm{det}(\C)}}.
\]
The constraint $\paramvect \in \Dset$ is managed by the
  acceptance-rejection step (since $\pi(\paramvect^{n+1/2}) =0$ when
  $\paramvect^{n+1/2} \notin \Dset$) but not necessarily in the
  proposal mechanism.
\begin{algorithm}[htbp]
  \KwData{a positive definite matrix $\C$; $\pas
    >0$; a positive integer $N_{\mathrm{max}}$; $\paramvect^{0} \in
    \Dset$} \KwResult{A $\Dset$-valued sequence $\{\paramvect^n, n \in
    \{0, \ldots, N_{\mathrm{max}} \} \}$} \For{$n=0, \ldots,
    N_{\mathrm{max}}-1$}{
    Sample
    $\boldsymbol{\xi}^{n+1} \sim \mathcal{N}_d(\modif{\0mat_d}, \C)$;\\ 
    Set $\paramvect^{n+\frac{1}{2}} = \stepalgo(\paramvect^n) + \boldsymbol{\xi}^{n+1}$\;
    Set $\paramvect^{n+1} = \paramvect^{n+\frac{1}{2}}$ with
    probability \begin{equation} \label{eq:AR:PGD}
    1 \wedge \frac{\pi(\paramvect^{n+\frac{1}{2}})}{\pi(\paramvect^n)}
\frac{q(\paramvect^{n+\frac{1}{2}}, \paramvect^n)}{q(\paramvect^n, \paramvect^{n+\frac{1}{2}})}
\end{equation}   and $\paramvect^{n+1} = \paramvect^n$ otherwise. 
  } \caption{Metropolis-Hastings with Gaussian
    proposal\label{algo:MALA}}
\end{algorithm}
The MALA algorithms drift the proposed moves towards areas of high
probability for the distribution $\pi$, using first order information
on $\pi$. Building on this idea, many strategies were proposed in the
literature in the setting defined by \Cref{ass:smooth}: $\stepalgo$
can either be a gradient step when $F$ is smooth, or a proximal step
(i.e., $\stepalgo = \prox_{\gamma F}$ also referred as an implicit
subgradient descent step), or a Moreau-Yosida envelope gradient step
(i.e. $\stepalgo = \modif{\Id_d} - \gamma \nabla (^\gamma F)$ where the Moreau envelope of a function $F$ with parameter $\gamma>0$ is defined as 
$
^{\gamma}F \eqdef \inf_y  \gamma F(y) + \frac{1}{2} \Vert \cdot-y \Vert^2
$).
In \cite{durmus:etal:2019}, explicit subgradient steps possibly
combined with a proximal step are used. 
In \cite{chatterji:etal:2020}, $\stepalgo$ relies on a Gaussian
smoothing of convex functions with Hölder-continuous sub-gradients;
this method applies under regularity conditions and convexity
assumptions on $g$ which are not implied by
\Cref{ass:smooth}-\Cref{ass:nonsmooth}.
In \cite{durmus:etal:2018}, the authors add a Moreau-Yosida envelope
term and a gradient term.
In \cite{luu:etal:2021}, $\stepalgo$ composes a Moreau-Yosida envelope
of $g(\A \cdot)$ and a gradient step.
Let us cite 
\cite{atchade:2015,schreck:etal:2016,salim:richtarik:2020} who also
use proximal operators in order to define trans-dimensional Monte
Carlo samplers -- an objective which is out of the scope defined by
A\ref{ass:smooth}. See also  \cite{pereyra:etal:survey:2015} and \cite[section 3]{luengo:etal:2020} for a survey on MCMC samplers using optimization techniques, and \cite[Section 4]{liu:liang:wong:2000} for optimization techniques combined with Multiple Try Metropolis strategies.

However, in the optimization context \eqref{eq:algoMPA} with a non-smooth objective function $F$, explicit sub-gradient method is suboptimal with rate $O(1/\sqrt{n})$ compared to proximal-based strategy such as forward-backward (relying on an implicit sub-gradient method) whose rate is  $O(1/n)$ \cite{nesterov:2003}. Proximal-based algorithms are thus especially adapted to handle non-smooth functions but also smooth functions that do not have a Lipschitz gradient. The only limitation of using proximal-based algorithms relies on the possible difficulty to derive a closed-form expression of the proximity operator. However, for most common functions encountered in signal processing, closed-forms exist and have been summarized on the website ProxRepository \url{http://proximity-operator.net/scalarfunctions.html} - see also \Cref{lem:propfoL} below, for standard functions composed with linear operators. When $F
= f + g$ where $f$ and $g$ satisfy Assumption~A\ref{ass:smooth},
deriving a proximal activation $\stepalgo$ is often a tedious task as
no closed form expression for $\prox_{f+g}$ exists in a general
framework
\cite{yu_decomposing_2013,Pustelnik_N_2017_j-ieee-spl}.  The standard solution consists in a
proximal-gradient activation: $\stepalgo(\paramvect) \eqdef
\prox_{\gamma g}(\paramvect - \gamma\nabla f(\paramvect))$. When
dealing with a blockwise structure for $g$ as in A\ref{ass:nonsmooth},
the choice of $\stepalgo$ has to manage both the additive structure of
$g$ and the combination of the $g_{i,j}$'s with a linear
operator. Unfortunately, the proximity operator has a closed form
expression in very limited cases recalled below in
Lemma~\ref{rem:linop}.

\begin{lemma} \label{lem:propfoL}Let $\A\in \mathbb{R}^{c\times d}$. 
\begin{enumerate}
\item \label{prop:compii} Let $g \eqdef \frac{1}{2}\Vert  \cdot - \modif{\boldsymbol{\mathsf{z}}} \Vert^2$ with $\boldsymbol{\mathsf{z}}\in
  \mathbb{R}^c$. For every $\gamma>0$,
$$
\prox_{\pas g(\A \cdot)} = (\gamma \A^\top \A + \Id_d)^{-1}( \cdot + \gamma \A^\top \modif{\boldsymbol{\mathsf{z}}}).
$$
\item 
\label{prop:compi}
Let $g$ be a proper lower semi-continuous convex
function. Let $\A \A^{\top}$ be invertible. For every $\gamma>0$,
$$\prox_{\pas g(\A \cdot)}=\Id_d- \A^{\top} (\A
\A^{\top})^{-1}(\Id_d-\prox_{\pas g}^{(\A \A^{\top})^{-1}})\A$$
where, for every $\modif{\boldsymbol{\mathsf{x}}} \in \mathbb{R}^d$,
$$
 \!\!\!\!\!\!\!\!\!\!  \prox_{\gamma g}^{(\A \A^{\top})^{-1}}\!\!(\modif{\boldsymbol{\mathsf{x}}})\!\!\eqdef\!\!\underset{\modif{\boldsymbol{\mathsf{y}}}}{\mathrm{argmin}}\; g(\modif{\boldsymbol{\mathsf{y}}}) + \frac{1}{2\gamma} (\modif{\boldsymbol{\mathsf{x}}}-\modif{\boldsymbol{\mathsf{y}}})^\top (\A \A^{\top})^{-1}(\modif{\boldsymbol{\mathsf{x}}}-\modif{\boldsymbol{\mathsf{y}}}). 
$$
\item 
  \label{prop:compiii}Let $g(\A\cdot) \eqdef \sum_{\ell=1}^c
g_\ell( \modif{\boldsymbol{\mathsf{a}}}_{\ell} \cdot)$ where $g_\ell$ is convex, lower
semi-continuous, and proper from $\mathbb{R}$ to $]-\infty,+\infty]$,
and $\boldsymbol{a}_{\ell}\in \RR^d$ denotes the row $\# \ell$ of
$\A$.  Suppose that $\A \A^\top=\modif{\boldsymbol{\Lambda}}$, where
$\modif{\boldsymbol{\Lambda}} \eqdef \mathrm{diag}(\chi_1, \ldots, \chi_c)$ and
$\chi_\ell>0$.  Then, for every $\gamma>0$: for all $\boldsymbol{\eta}\in \RR^{d}$  $$\!\!\!\!\!\!\!\!\!\!
\prox_{\pas
g(\A \cdot)} (\boldsymbol{\eta})
=\boldsymbol{\eta}-\A^{\top} \modif{\boldsymbol{\Lambda}}^{-1} \big(\A\boldsymbol{\eta}-\prox_{\pas \modif{\boldsymbol{\Lambda}}
g } (\A\boldsymbol{\eta}) \big) $$ where for all $\boldsymbol{\zeta}
= \boldsymbol{\zeta}_{1:c}$, we set $\modif{\boldsymbol{\Lambda}} g (\boldsymbol{\zeta}
) \eqdef \sum_{\ell=1}^c \chi_\ell g_\ell (\zeta_\ell)$.
\end{enumerate}
\label{rem:linop}
\end{lemma}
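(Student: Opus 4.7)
The three claims are essentially consequences of the first-order optimality condition for a proximity operator, so my plan is to reduce each part to such an optimality analysis rather than citing a general toolbox.

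\textbf{Part 1.} The function $\gamma g(\mathbf{A}\cdot) + \tfrac12\|\cdot - \boldsymbol{\eta}\|^2$ is strongly convex and quadratic, so $\prox_{\gamma g(\mathbf{A}\cdot)}(\boldsymbol{\eta})$ is the unique solution of the linear system obtained by zeroing its gradient:
\[
\gamma \A^\top(\A\mathbf{y}-\mathsf{z}) + (\mathbf{y}-\boldsymbol{\eta}) = \vect0_d .
\]
Rearranging gives $(\gamma\A^\top\A + \Id_d)\mathbf{y} = \boldsymbol{\eta} + \gamma\A^\top\mathsf{z}$, and invertibility of $\gamma\A^\top\A+\Id_d$ (positive definite) yields the announced closed form. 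This step is purely algebraic.

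\textbf{Part 2.} Let $\mathbf{y}^\star \eqdef \prox_{\gamma g(\A\cdot)}(\boldsymbol{\eta})$. The subdifferential calculus rule for $g\circ\A$ (valid since $g$ is proper, convex, lower semi-continuous and $\A$ is linear) together with the Fermat rule gives
\[
\vect0_d \in \gamma\,\A^\top\partial g(\A\mathbf{y}^\star) + (\mathbf{y}^\star - \boldsymbol{\eta}),
\]
hence there exists $\mathbf{w}^\star \in \partial g(\A\mathbf{y}^\star)$ with $\boldsymbol{\eta}-\mathbf{y}^\star = \gamma\A^\top\mathbf{w}^\star$. Setting $\mathbf{z}^\star \eqdef \A\mathbf{y}^\star$ and applying $\A$ to this identity, the invertibility of $\A\A^\top$ lets me solve $\mathbf{w}^\star = \gamma^{-1}(\A\A^\top)^{-1}(\A\boldsymbol{\eta}-\mathbf{z}^\star)$. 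Substituting back into $\mathbf{w}^\star\in\partial g(\mathbf{z}^\star)$ yields
\[
\vect0_c \in \partial g(\mathbf{z}^\star) + \gamma^{-1}(\A\A^\top)^{-1}(\mathbf{z}^\star - \A\boldsymbol{\eta}),
\]
which is precisely the optimality condition characterising $\mathbf{z}^\star = \prox_{\gamma g}^{(\A\A^\top)^{-1}}(\A\boldsymbol{\eta})$ (uniqueness is guaranteed by strong convexity of the weighted quadratic penalty, $(\A\A^\top)^{-1}$ being positive definite). Finally, $\mathbf{y}^\star = \boldsymbol{\eta} - \gamma\A^\top\mathbf{w}^\star = \boldsymbol{\eta} - \A^\top(\A\A^\top)^{-1}(\A\boldsymbol{\eta}-\mathbf{z}^\star)$, which is the claimed identity. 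The trickiest points are to justify the subdifferential chain rule and to argue uniqueness of the dual multiplier $\mathbf{w}^\star$, which follows from $\A^\top$ being injective on the range of $\A\A^\top$.

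\textbf{Part 3.} Here I simply specialise Part 2 to $\A\A^\top=\boldsymbol{\Lambda}$ diagonal and $g$ separable: the $\mathbf{\Lambda}^{-1}$-weighted prox decouples across coordinates because both the quadratic $\tfrac{1}{2\gamma}(\mathbf{v}-\mathbf{y})^\top\boldsymbol{\Lambda}^{-1}(\mathbf{v}-\mathbf{y}) = \sum_\ell (2\gamma\chi_\ell)^{-1}(v_\ell-y_\ell)^2$ and $g(\mathbf{y})=\sum_\ell g_\ell(y_\ell)$ are separable. Thus, componentwise,
\[
\bigl(\prox_{\gamma g}^{\boldsymbol{\Lambda}^{-1}}(\mathbf{v})\bigr)_\ell = \arg\min_{y\in\RR}\; g_\ell(y) + \frac{1}{2\gamma\chi_\ell}(v_\ell-y)^2 = \prox_{\gamma\chi_\ell g_\ell}(v_\ell),
\]
and the right-hand side is by definition the $\ell$-th component of $\prox_{\gamma\boldsymbol{\Lambda} g}(\mathbf{v})$. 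Substituting $\mathbf{v}=\A\boldsymbol{\eta}$ and $(\A\A^\top)^{-1}=\boldsymbol{\Lambda}^{-1}$ into the identity of Part 2 finishes the proof. I expect the main obstacle across the three items to be the careful handling of the dual variable in Part 2, in particular asserting that $\mathbf{w}^\star$ is well defined by $\A^\top\mathbf{w}^\star = \gamma^{-1}(\boldsymbol{\eta}-\mathbf{y}^\star)$ (this uses only surjectivity of $\A^\top$ onto an appropriate subspace, guaranteed by invertibility of $\A\A^\top$); the other steps are routine.
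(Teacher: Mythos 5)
Your proof is correct, but it takes a different route from the paper: the paper disposes of all three items by citation -- item 1 to Combettes--Pesquet, item 2 to \cite[Propositions 23.25]{bauschke2011convex}, and item 3 to \cite{Pustelnik_N_2011_ieee-tip_par_pai} as a specialization of item 2 -- whereas you give a self-contained derivation from the Fermat rule. Your reduction of item 3 to item 2 via separability of the $\boldsymbol{\Lambda}^{-1}$-weighted prox coincides with the paper's remark that 3) follows from 2), and your Parts 1 and 2 reconstruct the content of the cited results: Part 1 is exactly the strongly convex quadratic normal equation, and Part 2 is the standard dual-multiplier argument behind the cited propositions. What your version buys is transparency (the reader sees exactly where invertibility of $\A\A^\top$ enters, namely in solving for the multiplier $\mathbf{w}^\star$ and in the strong convexity of the weighted prox); what the paper's version buys is brevity and delegation of the technical hypotheses to the references. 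One small point to tighten: in Part 2 the subdifferential chain rule $\partial(g\circ\A)=\A^\top\partial g(\A\,\cdot)$ is not automatic from properness, convexity and lower semi-continuity of $g$ alone -- it needs a qualification condition, which here is supplied precisely by the invertibility of $\A\A^\top$ (i.e.\ surjectivity of $\A$, so that the range of $\A$ meets the interior condition on $\mathrm{dom}\, g$ trivially and $g\circ\A$ is proper); you should invoke that rather than the list of properties you give. Your closing remarks about injectivity/surjectivity of $\A^\top$ are slightly muddled in phrasing (uniqueness of $\mathbf{w}^\star$ is not even needed -- existence plus applying $\A$ and inverting $\A\A^\top$ suffices), but they do not affect the validity of the argument.
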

\begin{proof} 1) See \cite{Combettes2011}. 2) This result is a direct consequence of \cite[Proposition 23.25 (ii)]{bauschke2011convex} and \cite[Proposition 23.345(ii)-(iii)]{bauschke2011convex}. 3) Result extracted from~ \cite{Pustelnik_N_2011_ieee-tip_par_pai} and a direct consequence of 2) for specific choices of $\A$ and $g$.
\end{proof}

 \subsection{ Novel  Blockwise Proximal Monte Carlo Samplers: {\tt PGdec} and {\tt PGdual}}
 \label{sec:PGdecAndPGdual}
None of the algorithms recalled in the previous section directly applies
to the context of Assumptions A\ref{ass:smooth} to
A\ref{ass:nonsmooth}. We propose two novel Metropolis-Hastings algorithms: the
{\tt PGdec} sampler and the {\tt PGdual} sampler, which use the
  proximal operator to handle the nonsmooth part $g$ of $-\ln \pi$, its
  additive structure and its combination with linear operators. Such strategies can be used as building blocks for more general MCMC samplers such as Multiple Try based samplers (see e.g. \cite{martino:2018} and  \cite{bedard:douc:moulines:2012}) and more generally, adaptive Monte Carlo samplers (see e.g.  \cite{andrieu:thoms:2008}); 
An extension to Gibbs sampler is proposed and discussed in \Cref{sec:gibbssampler}.\\

\noindent$\bullet$ \noindent \textbf{ The {\tt PGdec} sampler.} Additionally to Assumptions~A\ref{ass:smooth} and
A\ref{ass:nonsmooth}, we further assume: 
\begin{assumption} \label{ass:nonsmooth3} Each function $g_{i,j}(\A_{i,j}\cdot)$ possesses a proximal operator  having a closed form expression.
\end{assumption}
 \Cref{ass:nonsmooth3}
assumes that each component $g_{i,j}(\A_{i,j} \cdot)$ has a tractable
proximal operator which does not imply that the  sum
$g = \sum_{j=1}^J \sum_{i=1}^{I_j} g_{i,j}$ admits a tractable proximal operator.

The
         {\it Proximal-Gradient Decomposition sampler} ({\tt PGdec})
         is described by \autoref{algo:PGD}.  It is a
         Metropolis-Hastings sampler with Gaussian proposal:
         conditionally to the current point $\paramvect^n$, {\tt
           PGdec} proposes a move to $\paramvect^{n+1/2} =
         (\paramvect_{1}^{n+1/2},\cdots, \paramvect_{J}^{n+1/2})$
         sampling independently the $J$ blocks from Gaussian
         distributions (see $\paramvect^{n+1/2}_j$ in
         \autoref{line:updateblock} of \autoref{algo:PGD}); Then an
         acceptance-rejection step is applied (see
         \eqref{eq:AR:PGD}).  The originality of our method is the
         definition of $\stepalgo$: for every $j \in  \{1, \ldots, J\}$
         and $i \in \lbrace 1, \ldots, I_j \rbrace$,
\begin{equation}\label{eq:drift:PGD}
 \stepalgo_{i,j}^{\textrm{PGdec}} (\paramvect) \eqdef \prox_{\pas_j \, g_{i,j}(\A_{i,j} \cdot)}\left(\paramvect_j - \pas_j \, \nabla_j f(\paramvect) \right),
\end{equation}
where $\pas_j$ is a positive step size and $\nabla_j$ denotes the
differential operator w.r.t. the block $\# j$ of $\paramvect$.  The
proposed drift takes benefit of the blockwise separable expression of
$g$ and computes at each iteration the proximal operator associated to
part of the sum in order to perform the proximal activation with a
closed form expression.  Conditionally to $\paramvect^n$, for each
block $\# j$, one of the component $\# i_j$ is selected at random in
$\{1, \ldots, I_j\}$ (see \autoref{line:selectblock}); then,
$\paramvect^{n+1/2}_j$ is sampled from a $\rset^{d_j}$-valued Gaussian
distribution with expectation $
\stepalgo_{i_j,j}^{\textrm{PGdec}}(\paramvect^n)$ and covariance
matrix $\C_{i_j,j}$. We denote by $q_{i,j}(\paramvect, \boldsymbol{\tau} )$ the
density of the distribution $\mathcal{N}_{d_j}( \stepalgo_{i,j}(\paramvect),
\C_{i,j})$ evaluated at $\boldsymbol{\tau}  \in \rset^{d_j}$:
\[
q_{i,j}(\paramvect,\boldsymbol{\tau} ) \eqdef \frac{\exp\left(-0.5 \, (\boldsymbol{\tau} 
  - \stepalgo_{i,j}(\paramvect))^\top \,
  \C_{i,j}^{-1} (\boldsymbol{\tau}  -
  \stepalgo_{i,j}(\paramvect))\right)}{\sqrt{2\pi}^{d_j}
  \sqrt{\mathrm{det}(\C_{i,j})}}.
\]

\begin{remark}
Let $j \in \{1, \ldots, J\}$ and $i \in \{1, \ldots, I_j \}$.  Given
$\paramvect^n= (\paramvect^n_{1:J}) \in \rset^{d}$,
$\stepalgo_{i,j}^{\textrm{PGdec}}(\paramvect^n)$ successively computes
a gradient step w.r.t. the smooth function $f$ and the variable
$\paramvect_j$, and a proximal step with respect to the function
$g_{i,j}(\A_{i,j} \cdot)$; the step size is $\gamma_j$ for both
steps. Hence, $\stepalgo_{i,j}^{\textrm{PGdec}}(\paramvect^n)$ is a
Proximal-Gradient (PG) step w.r.t. the function
\begin{multline}\label{eq:fonction:image}
  \paramvect_j \mapsto f(\paramvect_{1:j-1}^n, \paramvect_j,
  \paramvect_{j+1:J}^n)  + g_{i,j}(\A_{i,j} \paramvect_j).
\end{multline}
\end{remark}

 \begin{example}[\Cref{ex:PGDcase1} to follow] The proximal operator
of $\Rvect \mapsto
\lambdatime \|\D2 \Rvect \|_1$ is not explicit, so that, when decomposing $g$ as proposed in \Cref{ex:PGDcase1}, {\tt PGdec} can not be applied to  approximate  $\pi_\Zvect$.
\end{example}
\begin{example}[\Cref{ex:PGDcase2} to follow] \label{ex:PGDcase5}
For all $i \in \{1, 2,3 \}$, $\A_{i,1} \A_{i,1}^\top$ is the identity
matrix so that, by \Cref{lem:propfoL},
$\prox_{\pas_1\, \lambdatime \| \A_{i,1} \cdot \|_1}$ is explicit and
given by $(\Id_T - \A_{i,1}^\top \A_{i,1})
+ \A_{i,1}^\top \prox_{\pas_1 \, \lambdatime \| \cdot \|_1}(\A_{i,1} \cdot)$. In
addition, $\prox_{\pas_2 \, \lambdaO \|\cdot \|_1}$ has a closed form
expression. Hence, when decomposing $g$ as proposed
in \Cref{ex:PGDcase2}, {\tt PGdec} can be applied to  approximate
$\pi_\Zvect$.
\end{example}

\begin{algorithm}[t]
  \KwData{ $d_j \times d_j$ positive definite matrices $\C_{i,j}$;
    $\pas_j >0$; a positive integer $N_{\mathrm{max}}$;
    $\paramvect^0 \in \Dset$} \KwResult{A $\Dset$-valued sequence
    $\{\paramvect^n, n \in \{0, \ldots,
    N_{\mathrm{max}} \} \}$} \For{$n=0, \ldots,
    N_{\mathrm{max}}-1$}{ \For{$j = 1, \ldots, J$}{ Sample
    $i_j \in \lbrace 1, \ldots, I_j \rbrace$ with probability
    $1/I_j$ \label{line:selectblock} \; Sample
    $\boldsymbol{\xi}^{n+1}_{j} \sim \mathcal{N}_{d_j}(\0mat_{d_j}, \C_{i_j,j})$;\\ Set $\paramvect_j^{n+\frac{1}{2}}
    = \stepalgo_{i_j,j}(\paramvect^n)
    + \boldsymbol{\xi}^{n+1}_{j}$\; \label{line:updateblock} } \label{line:AR}
    Set $\paramvect^{n+1} = \paramvect^{n+\frac{1}{2}}$ with
    probability \begin{equation} \label{eq:AR:PGD}
    1 \wedge \frac{\pi(\paramvect^{n+\frac{1}{2}})}{\pi(\paramvect^n)}
\prod_{j = 1}^J\frac{q_{i_j,j}(\paramvect^{n+\frac{1}{2}}, \paramvect_j^n)}{q_{i_j,j}(\paramvect^n, \paramvect_j^{n+\frac{1}{2}})}
\end{equation}   and $\paramvect^{n+1} = \paramvect^n$ otherwise. 
  } \caption{ Blockwise Metropolis-Hastings samplers \label{algo:PGD}}
\end{algorithm}
\vspace{0.3cm}

\noindent $\bullet$ \noindent \textbf{\bf The {\tt PGdual}
  sampler.}  The {\it Proximal-Gradient dual sampler} ({\tt PGdual})
is defined along the same lines as \autoref{algo:PGD}. It is
  designed for situations when for any $i,j$, the dimensions of the
  matrices $\A_{i,j}$ satisfy $c_{i,j} \leq d_j$ and $\A_{i,j}$ can be
  augmented in an invertible $d_j \times d_j$ matrix -- denoted by
  $\barA_{i,j}$.

For every $j \in \{1, \ldots, J \}$ and $i \in \{1, \ldots, I_j\}$,
let $\barA_{i,j}$ be a $d_j \times d_j$ invertible matrix such that
for any $\paramvect_j \in \rset^{d_j}$,
$( \barA_{i,j} \paramvect_j)_{d_j -c_{i,j}+1:d_j}
= \A_{i,j} \paramvect_j$; For $\mathbf{x} = (x_1,\ldots, x_{d_j})\in
\RR^{d_j}$, define $\bar{g}_{i,j}(\mathbf{x} ) \eqdef
g_{i,j}(\modif{\mathbf{x}}_{d_j-c_{i,j}+1:d_j})$. {\tt PGdual} uses the drift
functions
\begin{equation} \label{eq:driftPGdual0}
 \stepalgo_{i,j}^{\textrm{PGdual}} (\paramvect) \eqdef   \barA_{i,j}^{-1} \prox_{\pas_j \bar{g}_{i,j}}\left(\barA_{i,j} \paramvect_j - \pas_j \barA_{i,j}^{-\top} \nabla_j f(\paramvect) \right).
\end{equation}

\begin{remark}  \label{rem:PGdual}
For every $j\in\{1,\ldots, J\}$, select $i_j \in \{1,\ldots, I_j\}$,
and consider the partial objective function:
\begin{equation}
\paramvect\mapsto f(\paramvect_1,\ldots, \paramvect_J) + \sum_j  \bar{g}_{i_j,j}(\barA_{i_j,j}\paramvect_j).
\end{equation}
Consider the one-to-one maps $\widetilde{\paramvect}_{j} \eqdef
\barA_{i_j,j}\paramvect_j$ for any $j$, and the application
\begin{equation}
\widetilde{\paramvect} = \widetilde{\paramvect}_{1:J} \mapsto
f(\barA_{i_1,1}^{-1} \widetilde{\paramvect}_{1}, \ldots,
\barA_{i_J,J}^{-1} \widetilde{\paramvect}_{J}) + \sum_j
\bar{g}_{i_j,j}( \widetilde{\paramvect}_{j}).
\end{equation}
The PG step w.r.t. $\widetilde{\paramvect}_{j}$ reads:
\begin{multline}
\prox_{\pas_j \, \bar{g}_{\modif{i_j},j}}\left(\widetilde{\paramvect}_{j} -
\gamma_j \, \barA_{i_j,j}^{-\top} \, \nabla_j f(\barA_{i_1,1}^{-1}
\widetilde{\paramvect}_{1}, \ldots, \barA_{i_J,J}^{-1}
\widetilde{\paramvect}_{J}) \right)\\ = \prox_{\pas_j \,
  \bar{g}_{\modif{i_j},j}}\left(\barA_{i_j,j} {\paramvect}_{j} - \gamma_j \,
\barA_{i_j,j}^{-\top} \, \nabla_j f(\paramvect) \right).
\end{multline}
Therefore, applying a PG step w.r.t. the variable
$\widetilde{\paramvect}_{j}$ in this "{\it dual}" space, and going back
to the original space by applying $\barA_{i_j,j}^{-1}$ leads to the
drift $\stepalgo_{i_j,j}^{\textrm{PGdual}}$ in
\eqref{eq:driftPGdual0}. \\
\modif{From \eqref{eq:driftPGdual0} and \Cref{lem:propfoL} \Cref{prop:compi}, we have
\begin{equation}\label{eq:VMFB}
 \stepalgo_{i,j}^{\textrm{PGdual}} (\paramvect) \eqdef    \prox_{\pas_j \bar{g}_{i,j}(\barA_{i,j} \cdot)}^{\barA_{i,j}^\top \barA_{i,j}}\left(\paramvect_j - \pas_j (\barA_{i,j}^\top \barA_{i,j})^{-1} \nabla_j f(\paramvect) \right)
\end{equation}
thus showing that it is a {\it Variable Metric Proximal-Gradient}
step in the metric induced by $\barA_{i,j}^\top \barA_{i,j}$, for the
minimization of $\paramvect_j \mapsto f(\paramvect) + \bar{g}_{i,j}(\barA_{i,j} \paramvect_j)$. See \cite{chen:rockafeller:1997,combettes:vu:2014}; see also \cite[section 2]{abry:etal:2023} in the specific case $J=1,I=1$.}
\end{remark}

\begin{example}[\Cref{ex:PGDcase1} to follow] \label{ex:PGDcase3}
Denote by $\barDinv$ the $T \times T$ invertible  matrix obtained by augmenting $\D2$ with two vectors in $\rset^T$ as follows:
\begin{equation} \label{eq:def:barD2}
\barDinv \eqdef \begin{bmatrix}
1 & 0 & 0 & \ldots & 0 \\ -2/\sqrt{5} & 1/\sqrt{5} & 0 & \ldots & 0 \\
& & \D2 & & \\
\end{bmatrix};
\end{equation}
$\D2$ is subdiagonal with normalized rows (see Eq.~\eqref{eq:def:D2}) and we define $\barDinv$  so that it has the same properties. Then $
(\barDinv \Rvect)_{3:T} = \D2 \Rvect$, and for $\Rvect \in \rset^T$,
$\bar{g}_{1,1}(\Rvect) = \lambdatime \| \Rvect_{3:T} \|_1$. In
addition, $\bar{g}_{1,2} = g_{1,2}$. Hence, when decomposing $g$ as
proposed in \Cref{ex:PGDcase1}, {\tt PGdual} can be used to approximate
$\pi_\Zvect$.
\end{example}
\noindent $\bullet$ \textbf{Choice of the covariance matrix $\C$}.
Based on \Cref{rem:PGdual},
  $\barA_{i,j} \stepalgo^{\textrm{PGdual}}_{i,j}(\widetilde{\paramvect})$
  is a PG step in a dual space. Since such a step can be seen as an
  extension of a gradient step to a nonsmooth function, a natural idea
  is to mime the MALA proposal and add a Gaussian
  noise \modif{$\widetilde{\boldsymbol{\xi}} $} with covariance matrix
  $2 \pas_j \Id_{d_j}$ in the dual space. Therefore, in the original
  space, the noise \modif{$\boldsymbol{\xi}
  :=\barA_{i,j}^{-1} \widetilde{\boldsymbol{\xi}}$ has covariance
  matrix} $\C_{i,j} \eqdef
  2 \pas_j \barA_{i,j}^{-1} \barA_{i,j}^{-\top}$. In \eqref{eq:driftPGdual0},
  note the preconditioning matrix $\barA_{i,j}^{-\top}$ before the
  gradient and the matrix $\barA_{i,j}^{-1}$ before the proximal
  operator: there is a parallel between such a choice of $\C_{i,j}$
  and the scaled MALA proposal
  mechanism \eqref{eq:LangevinT:dynamic}. \\ \modif{Equivalently, note
  the parallel between the preconditioning of the gradient term
  in \eqref{eq:VMFB}, the covariance matrix $\C_{i,j}$
  and \eqref{eq:LangevinT:dynamic}.}
  
  An equivalent argumentation can be developed for $\stepalgo^{\textrm{PGdec}}$.

\subsection{Interpretations of {\tt PGdec} and {\tt PGdual}}
In the case $\A_{i,j} \A_{i,j}^\top = \nu_{i,j} \Id_{c_{i,j}}$ with
$\nu_{i,j} >0$, we have from Lemma~\ref{lem:propfoL} \Cref{prop:compiii},
\begin{multline} \label{eq:decomposition:mu} \stepalgo_{i,j}^{\textrm{PGdec}} (\paramvect^n) =
  \left( \Id_{d_j} - \modif{\boldsymbol{\Pi}}_{i,j}\right) \modif{\boldsymbol{\mathsf{G}}}_{j}(\paramvect^n) \\
  + \A_{i,j}^\top \left(\A_{i,j} \A_{i,j}^\top \right)^{-1} \prox_{ \nu_{i,j} \pas_{j}
  g_{i,j}} \left( \A_{i,j} \modif{\boldsymbol{\mathsf{G}}}_{j}(\paramvect^n) \right),
\end{multline}
where $\modif{\boldsymbol{\mathsf{G}}}_{j}(\paramvect^n) \eqdef \paramvect_j^n - \pas_j \nabla_j
f(\paramvect^n)$ is a gradient step, and
\[
\modif{\boldsymbol{\Pi}}_{i,j} \eqdef \A_{i,j}^\top \left(\A_{i,j} \A_{i,j}^\top
\right)^{-1} \A_{i,j}
\]
 is the orthogonal projection matrix on the range of
 $\A_{i,j}^\top$. \eqref{eq:decomposition:mu} shows that
 $\stepalgo_{i,j}^{\textrm{PGdec}} (\paramvect^n)$ is the sum of two
 orthogonal terms: the first one is the orthogonal projection of $\modif{\boldsymbol{\mathsf{G}}}_{j}(\paramvect^n)$ on the orthogonal space of the range of
 $\A_{i,j}^\top$, and the second term is in the range space of
 $\A_{i,j}^\top$.  This second term may be seen as a {\it
 proximal-contraction} of $\modif{\boldsymbol{\Pi}}_{i,j} \modif{\boldsymbol{\mathsf{G}}}_{j}(\paramvect^n)$.

 \autoref{theo:driftdual} makes the {\tt PGdual} idea explicit by
 proposing a matrix $\barA_{i,j}$ augmenting $\A_{i,j}$, computing the
 associated drift $\stepalgo_{i,j}^{\mathrm{PGdual}}$ and comparing it
 to $\stepalgo_{i,j}^{\textrm{PGdec}}$.

\begin{theorem} \label{theo:driftdual} Assume A\autoref{ass:smooth}
  and A\autoref{ass:nonsmooth}. Let $j \in \{1, \ldots,
  J \}$ and $i \in \{1, \ldots, I_j \}$.  Assume that $c_{i,j} < d_j$
  and $\A_{i,j} \A^\top_{i,j} $ is invertible.  \\ Let $\U_{i,j}$ be a
  $(d_j-c_{i,j}) \times d_j$ matrix such that $\U_{i,j} \A^\top_{i,j}=
  \0mat_{ (d_j-c_{i,j}) \times c_{i,j}}$ and $\U_{i,j} \U_{i,j}^\top$ is
  invertible.  \\ Then, the matrix 
  $\barA_{i,j} \eqdef [\U_{i,j}; \A_{i,j}]\in \mathbb{R}^{d_j\times
  d_j}$ is invertible.  For any $\paramvect \in \Dset$,
  $\stepalgo_{i,j}^{\textrm{PGdual}}(\paramvect)$ given
  by \eqref{eq:driftPGdual0} is equal to
\begin{align}
\label{eq:pgdual-pgd}
& \A_{i,j}^\top \left(\A_{i,j} \A_{i,j}^\top \right)^{-1}
\prox_{\pas_j g_{i,j}} \left( \A_{i,j} \left( \paramvect_j - \pas_j
\widetilde{\modif{\boldsymbol{\Omega}}}_{i,j} \nabla_j f(\paramvect)\right) \right)
\nonumber\\ & \quad + (\Id_{d_{j}} - \modif{\boldsymbol{\Pi}}_{i,j}) \left(\paramvect_j -
\pas_j \modif{\boldsymbol{\Omega}}_{i,j} \nabla_j f(\paramvect) \right),
  \end{align}
where
\[
\modif{\boldsymbol{\Omega}}_{i,j} \eqdef \U^\top_{i,j} (\U_{i,j} \U^\top_{i,j})^{-2}
\U_{i,j}, \ \  \widetilde{\modif{\boldsymbol{\Omega}}}_{i,j} \eqdef \A^\top_{i,j} (\A_{i,j}
\A^\top_{i,j})^{-2} \A_{i,j}.
\]
Additionally, when $\U_{i,j} \U_{i,j}^\top = \Id_{d_j-c_{i,j}}$, $\A_{i,j}
\A_{i,j}^\top = \Id_{c_{i,j}}$, and under A\autoref{ass:nonsmooth3} then $\stepalgo_{i,j}^{\textrm{PGdual}} = \stepalgo_{i,j}^{\textrm{PGdec}}$ where
$\stepalgo_{i,j}^{\textrm{PGdec}}$ is given
by \eqref{eq:decomposition:mu}.
\end{theorem}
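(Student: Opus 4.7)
The plan is to work directly with the block structure induced by stacking $\U_{i,j}$ above $\A_{i,j}$ and exploit the orthogonality relation $\U_{i,j}\A_{i,j}^{\top}=\0mat$. First I would establish invertibility of $\barA_{i,j}$ by computing
\[
\barA_{i,j}\barA_{i,j}^{\top}=\begin{bmatrix}\U_{i,j}\U_{i,j}^{\top}&\0mat\\\0mat&\A_{i,j}\A_{i,j}^{\top}\end{bmatrix},
\]
which is block-diagonal with invertible blocks by assumption; hence $\barA_{i,j}$ has full rank. This also immediately yields explicit formulas
\[
\barA_{i,j}^{-1}=\bigl[\,\U_{i,j}^{\top}(\U_{i,j}\U_{i,j}^{\top})^{-1},\; \A_{i,j}^{\top}(\A_{i,j}\A_{i,j}^{\top})^{-1}\bigr],
\]
and a matching expression for $\barA_{i,j}^{-\top}$, whose rows are $(\U_{i,j}\U_{i,j}^{\top})^{-1}\U_{i,j}$ and $(\A_{i,j}\A_{i,j}^{\top})^{-1}\A_{i,j}$.

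Next I would unpack the definition \eqref{eq:driftPGdual0}. Because $\bar g_{i,j}(\mathbf{x})=g_{i,j}(\mathbf{x}_{d_j-c_{i,j}+1:d_j})$ acts only on the trailing coordinates, its proximity operator acts as the identity on the first $d_j-c_{i,j}$ coordinates and as $\prox_{\pas_j g_{i,j}}$ on the last $c_{i,j}$ coordinates (this is the standard separable decomposition of $\prox$). Using the explicit forms of $\barA_{i,j}$ and $\barA_{i,j}^{-\top}$, the argument of $\prox_{\pas_j\bar g_{i,j}}$ splits cleanly as
\[
\bigl(\U_{i,j}\paramvect_j-\pas_j(\U_{i,j}\U_{i,j}^{\top})^{-1}\U_{i,j}\nabla_j f(\paramvect)\bigr)\;\oplus\;\bigl(\A_{i,j}\paramvect_j-\pas_j(\A_{i,j}\A_{i,j}^{\top})^{-1}\A_{i,j}\nabla_j f(\paramvect)\bigr).
\]
Left-multiplying by $\barA_{i,j}^{-1}$ then produces two terms, one in the range of $\U_{i,j}^{\top}$ and one in the range of $\A_{i,j}^{\top}$.

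The main algebraic obstacle, and the step on which I would spend most attention, is matching these two terms to the target expression \eqref{eq:pgdual-pgd}. For the "$\U$-term" I would use the identity $\U_{i,j}^{\top}(\U_{i,j}\U_{i,j}^{\top})^{-1}\U_{i,j}=\Id_{d_j}-\modif{\boldsymbol{\Pi}}_{i,j}$, which follows from $\U_{i,j}\A_{i,j}^{\top}=\0mat$ and a dimension count showing that the range of $\U_{i,j}^{\top}$ coincides with the orthogonal complement of the range of $\A_{i,j}^{\top}$. This produces $(\Id_{d_j}-\modif{\boldsymbol{\Pi}}_{i,j})\paramvect_j-\pas_j\modif{\boldsymbol{\Omega}}_{i,j}\nabla_j f(\paramvect)$, and the crucial observation $\modif{\boldsymbol{\Pi}}_{i,j}\modif{\boldsymbol{\Omega}}_{i,j}=\0mat$ (again from $\A_{i,j}\U_{i,j}^{\top}=\0mat$) lets me factor the gradient inside $(\Id_{d_j}-\modif{\boldsymbol{\Pi}}_{i,j})$, giving the second summand of \eqref{eq:pgdual-pgd}. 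For the "$\A$-term" I would use $\A_{i,j}\widetilde{\modif{\boldsymbol{\Omega}}}_{i,j}=(\A_{i,j}\A_{i,j}^{\top})^{-1}\A_{i,j}$ to rewrite $(\A_{i,j}\A_{i,j}^{\top})^{-1}\A_{i,j}\nabla_j f(\paramvect)=\A_{i,j}\widetilde{\modif{\boldsymbol{\Omega}}}_{i,j}\nabla_j f(\paramvect)$, yielding the first summand of \eqref{eq:pgdual-pgd}.

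Finally, to conclude the equivalence with {\tt PGdec}, I would specialize to $\U_{i,j}\U_{i,j}^{\top}=\Id_{d_j-c_{i,j}}$ and $\A_{i,j}\A_{i,j}^{\top}=\Id_{c_{i,j}}$. Under these assumptions $\widetilde{\modif{\boldsymbol{\Omega}}}_{i,j}=\modif{\boldsymbol{\Pi}}_{i,j}$ and $\modif{\boldsymbol{\Omega}}_{i,j}=\Id_{d_j}-\modif{\boldsymbol{\Pi}}_{i,j}$, so $\A_{i,j}\widetilde{\modif{\boldsymbol{\Omega}}}_{i,j}=\A_{i,j}$ and $(\Id_{d_j}-\modif{\boldsymbol{\Pi}}_{i,j})\modif{\boldsymbol{\Omega}}_{i,j}=\Id_{d_j}-\modif{\boldsymbol{\Pi}}_{i,j}$. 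Plugging these identities into \eqref{eq:pgdual-pgd} collapses both the inner argument of $\prox_{\pas_j g_{i,j}}$ and the outer projection term to expressions involving the common gradient step $\modif{\boldsymbol{\mathsf{G}}}_j(\paramvect)=\paramvect_j-\pas_j\nabla_j f(\paramvect)$, and I recover exactly the formula \eqref{eq:decomposition:mu} for $\stepalgo_{i,j}^{\textrm{PGdec}}$ (with $\nu_{i,j}=1$), completing the proof.
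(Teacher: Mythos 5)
Your proposal is correct and follows essentially the same route as the paper's proof: explicit block form of $\barA_{i,j}^{-1}$, separability of $\prox_{\pas_j \bar g_{i,j}}$ across the leading and trailing coordinates, and the identities $\U_{i,j}^\top(\U_{i,j}\U_{i,j}^\top)^{-1}\U_{i,j} = \Id_{d_j}-\modif{\boldsymbol{\Pi}}_{i,j}$ and $\A_{i,j}\widetilde{\modif{\boldsymbol{\Omega}}}_{i,j}=(\A_{i,j}\A_{i,j}^\top)^{-1}\A_{i,j}$, followed by the same specialization to the orthonormal case. The only cosmetic differences are that you obtain invertibility via the block-diagonal $\barA_{i,j}\barA_{i,j}^\top$ and justify the projection identity by a dimension count, where the paper reads both off from $\barA_{i,j}^{-1}\barA_{i,j}=\Id_{d_j}$; these are interchangeable.
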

\begin{proof}  The main ingredients are the equalities
\[
\barA^{-1}_{i,j} = \begin{bmatrix} \U^\top_{i,j} (\U_{i,j}
  \U^\top_{i,j})^{-1} & \A^\top_{i,j} (\A_{i,j}
  \A^\top_{i,j})^{-1} \end{bmatrix};
\] 
and $\U^\top_{i,j}(\U_{i,j} \U^\top_{i,j})^{-1} \U_{i,j} + \modif{\boldsymbol{\Pi}}_{i,j} =
\Id_{d_j}$.  The proof follows from standard matrix algebra. A
detailed proof is given in \autoref{proof:PGD} of the Supplementary
material.
\end{proof}
The result remains true when $c_{i,j} = d_j$; in that case, $\barA_{i,j} =
\A_{i,j}$, $\Id_{d_j} - \modif{\boldsymbol{\Pi}}_{i,j} = \0mat_{d_j \times d_j}$ and
$\widetilde{\modif{\boldsymbol{\Omega}}}_{i,j} =
\A_{i,j}^{-1}\A_{i,j}^{-\top}$. \autoref{theo:driftdual} provides
sufficient conditions for the drift function
$\stepalgo_{i,j}^{\textrm{PGdec}}$ and the drift function
$\stepalgo_{i,j}^{\textrm{PGdual}}$ to be equal. Observe that the
conditions on $\U_{i,j}$ are satisfied as soon as the rows of
$\U_{i,j}$ are orthonormal and orthogonal to the rows of $\A_{i,j}$.

Let us derive two strategies for the application of {\tt PGdual} to
sample the target density defined
by \eqref{eq:target:covid}.
\begin{example}[\Cref{ex:PGDcase1} and \Cref{ex:PGDcase3}, to follow]  \label{ex:PGDcase4}
A first strategy is to decompose $g$ as
in \Cref{ex:PGDcase1}; \Cref{ex:PGDcase3} provides a possible 
augmentation of $\D2$. Another one is proposed
by \autoref{theo:driftdual}:
\begin{equation}\label{eq:barDo}
  \barDo \eqdef \left[\begin{matrix} \U_{1,1} \\ \D2 \end{matrix} \right] \in \rset^{T \times
    T} \end{equation} where $\U_{1,1}$ is obtained by making
    orthogonal the first two rows of $\barDinv$ \modif{(see \eqref{eq:def:barD2})} and making them
    orthogonal to the rows of $\D2$. 
    \modif{With this choice, $\barDo$ has a condition number one order of magnitude smaller compared to $\barDinv$, which is likely to impact the convergence of Markov chains (see Section~\ref{sec:results}).}\\ This strategy acts globally on
    $\ln
\pi_\Zvect$ by proposing, at each iteration, a PG
approach for the function $f_\Zvect +\lambdatime \|\D2\cdot\|_1 +
\lambdaO \| \cdot \|_1$. It will be numerically
explored in \autoref{sec:assessment}.
\end{example}
\begin{example}[\Cref{ex:PGDcase2} to follow] 
A second strategy is to decompose $g$ as in \Cref{ex:PGDcase2}, and
define the matrices $\barA_{i,j}$ as described in
\autoref{theo:driftdual}. This strategy defines
$\stepalgo^{\mathrm{PGdual}}$ by considering part of $\ln \pi_\Zvect$: at each iteration, having selected $i_1 \in \{1, 2,
3\}$, it proposes a PG approach for the function $f_\Zvect +
\lambdatime \|\A_{i_1,1}\cdot\|_1 + \lambdaO \| \cdot \|_1$ (see
\Cref{rem:PGdual}).
\end{example}

\medskip

\subsection{ Convergence analysis of {\tt PGdec} and {\tt PGdual}.}
We prove that both {\tt PGdec} and {\tt PGdual} produce a sequence of
points $\{\paramvect^n,n \geq 0\}$ which is an ergodic Markov chain
having $\pi$ as its unique invariant distribution.
\begin{proposition}\label{prop:MCMCsampler}
  Assume A\autoref{ass:smooth}, A\autoref{ass:nonsmooth} and
  A\autoref{ass:nonsmooth3}. Assume also that $\pi$ is continuous on
  $\Dset$. Then the sequence $\{\paramvect^n, n \geq 0 \}$ given
  by \autoref{algo:PGD} applied with $\stepalgo
  = \stepalgo^{\mathrm{PGdec}}$ is a Markov chain, taking values in
  $\Dset$ and with unique invariant distribution $\pi$. In addition,
  for any initial point $\paramvect^0 \in \Dset$ and any measurable
  function $h$ such that $\int
  |h(\paramvect)| \, \pi(\paramvect) \rmd \paramvect
  < \infty$, \[ \lim_{N \to \infty} \frac{1}{N} \sum_{n=1}^N
  h(\paramvect^n) = \int
  h(\paramvect) \, \pi(\rmd \paramvect), \quad \text{with probability
  one.}  \] The same result holds when $\stepalgo
  = \stepalgo^{\mathrm{PGdual}}$.
\end{proposition}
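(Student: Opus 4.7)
The plan is to recognize both \texttt{PGdec} and \texttt{PGdual} as special instances of the randomized blockwise Metropolis--Hastings scheme of \autoref{algo:PGD}; the only distinction lies in the choice of drift $\stepalgo_{i,j}$, which is well-defined and continuous in both cases under \Cref{ass:smooth,ass:nonsmooth,ass:nonsmooth3} (via \eqref{eq:drift:PGD} for \texttt{PGdec}, and via \eqref{eq:driftPGdual0} together with the invertibility of $\barA_{i,j}$ for \texttt{PGdual}). It therefore suffices to verify, for a generic drift of that form, the four classical ingredients yielding a Harris ergodic theorem: $\pi$-invariance, $\pi$-irreducibility, aperiodicity, and Harris recurrence.

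For invariance, I would condition on the indices $\mathbf{i} = (i_1,\ldots,i_J)$ drawn uniformly on $\prod_j \{1,\ldots,I_j\}$ at \autoref{line:selectblock}. The resulting conditional kernel is a standard Metropolis--Hastings kernel with Gaussian proposal density $Q_{\mathbf{i}}(\paramvect,\boldsymbol{\tau}) = \prod_j q_{i_j,j}(\paramvect,\boldsymbol{\tau}_j)$ and acceptance ratio \eqref{eq:AR:PGD}, with the factors $1/I_j$ cancelling in numerator and denominator. The classical detailed-balance computation shows that this conditional kernel is $\pi$-reversible; averaging over $\mathbf{i}$ preserves $\pi$-reversibility, so the full transition kernel $P$ leaves $\pi$ invariant.

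For the remaining ingredients, the key fact is that each $q_{i,j}(\paramvect,\cdot)$ is Gaussian on $\mathbb{R}^{d_j}$, hence strictly positive and jointly continuous in $(\paramvect,\boldsymbol{\tau})$, and that $\pi$ is continuous on $\Dset$ by assumption. I would argue that for any $\paramvect \in \Dset$ and any compact $K \subset \mathrm{int}(\Dset)$ on which $\pi$ is bounded away from zero, one has $P(\paramvect, B) > 0$ for every Borel $B \subset K$ of positive Lebesgue measure, which yields $\pi$-irreducibility. Aperiodicity follows from the positivity of the rejection probability wherever the acceptance ratio is strictly less than one. Joint continuity and positivity of the Gaussian density and of $\pi$ on such compact sets give a uniform minorization of $P$ by a positive multiple of a probability measure supported on $K$, so every compact subset of $\mathrm{int}(\Dset)$ is small; the standard criterion (e.g.\ Tierney, or Meyn--Tweedie) then upgrades $\pi$-irreducibility to Harris recurrence. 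Uniqueness of the invariant distribution and the asserted strong law of large numbers then follow from the Harris ergodic theorem for every $\pi$-integrable $h$ and every $\paramvect^0 \in \Dset$. The argument is identical for $\stepalgo^{\mathrm{PGdual}}$, and the detailed computations are relegated to the supplementary material.

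The main technical obstacle I expect is the careful handling of the constraint set $\Dset$: the acceptance ratio automatically suppresses proposals outside $\Dset$, but irreducibility, small sets, and the minorization must be established relative to $\mathrm{int}(\Dset)$ rather than all of $\mathbb{R}^d$. Checking that $\Dset$ has nonempty interior on which $\pi$ is continuous and positive, and that the drifts $\stepalgo^{\mathrm{PGdec}}$ and $\stepalgo^{\mathrm{PGdual}}$ do not send interior points to problematic regions of the boundary, requires invoking the explicit form of $\Dset$ in \eqref{eq:const} and the continuity of the proximity operators. Everything else reduces to a standard application of the Metropolis--Hastings ergodic theory.
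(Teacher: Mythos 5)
Your route is essentially the paper's: treat {\tt PGdec} and {\tt PGdual} as mixtures, over the index draws, of blockwise Metropolis--Hastings kernels with Gaussian proposals; obtain $\pi$-invariance by detailed balance for each conditional kernel (the uniform weights $1/I_j$ cancel); obtain $\phi$-irreducibility, aperiodicity and smallness of compact sets from positivity and continuity of the Gaussian proposal densities, continuity of $\pi$ on $\Dset$, and continuity and finiteness of the drifts $\stepalgo_{i,j}$ (continuity of the proximity operator, and invertibility of $\barA_{i,j}$ for {\tt PGdual}); and conclude by the ergodic theorem for positive Harris chains. Confinement to $\Dset$ is handled, as in the paper, by the acceptance step alone; your concern that the drifts might map interior points to ``problematic'' boundary regions is not needed, since the proposal has full support on $\rset^d$ and the constraint enters only through $\pi(\paramvect^{n+1/2})=0$ outside $\Dset$.

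The one step that does not hold as you state it is the upgrade to Harris recurrence. Smallness of compacts plus $\pi$-irreducibility plus an invariant probability yields positive recurrence, hence the law of large numbers only for $\pi$-almost every initial point; no generic ``standard criterion'' converts this into Harris recurrence, and the Roberts--Rosenthal reference cited in the paper exists precisely because hybrid Metropolis-type samplers can fail to be Harris recurrent. The argument actually needed -- and the one given in the supplementary material -- exploits the structure of the kernel: write $P_\star(\paramvect,\cdot)=(1-r(\paramvect))\,M(\paramvect,\cdot)+r(\paramvect)\,\delta_{\paramvect}$, observe that $M(\paramvect,\cdot)$ is absolutely continuous with respect to Lebesgue measure, so for any $A$ with $\int_A \pi\,\rmd\paramvect=1$ the complement $A^c$ is Lebesgue-null and $M(\paramvect,A^c)=0$, while $r(\paramvect)<1$ by irreducibility; hence $\PP_{\paramvect}(\tau_A=\infty)=r(\paramvect)\PP_{\paramvect}(\tau_A=\infty)=0$ for every $\paramvect\in\Dset$, which is Harris recurrence. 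Your citation of Tierney points at the right fact for a plain Metropolis--Hastings chain, but since the kernel here is a mixture over the block/index selections, this absolute-continuity argument must be run explicitly for the mixture kernel, as the paper does; the minorization-on-compacts route you propose does not deliver it.
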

\begin{proof}
 We have $\paramvect^{n+1} \in \Dset$ since $\pi(\paramvect^{n+1/2})
 =0$ when $\paramvect^{n+1/2} \notin \Dset$.  The other properties
 result from \cite[Lemmas 1.1. and 1.2]{mengersen:tweedie:1996}
 and \cite[Propositions 10.1.1 and 10.4.4 and Theorem
 17.0.1]{meyn:tweedie:2009}. The Harris recurrence property (required
 for the law of large numbers to hold for any $\paramvect^0$) can be
 proved along the same lines as \cite[Theorem
 6(v)]{roberts:rosenthal:2006}. See the detailed proof
 in \autoref{proof:PGD} of the Supplementary material.
\end{proof}
\begin{remark} 
A slight adaption of the proof shows that when selecting the indices
$i_j$, the uniform distribution on $\{1, \ldots, I_j\}$ can be
replaced with a probability distribution $\{\rho_{i,j}(\paramvect^n),
i=1, \ldots, I_j \}$ depending on the current value $\paramvect^n$, up
to a modification of the acceptance-rejection ratio \eqref{eq:AR:PGD}
(see \autoref{proof:PGD} of the Supplementary material).
\end{remark}
\medskip

\subsection{ {\tt Gibbs PGdec} and {\tt Gibbs PGdual} samplers.}
\label{sec:gibbssampler}
The additive structure of $g$ in \Cref{ass:nonsmooth}, naturally
suggests the extension of {\tt PGdec} and {\tt PGdual} to the Gibbs
sampler algorithm~\modif{\cite{geman:geman:1984,gelfand:smith:1990}}.  Since whatever $\paramvect
= \paramvect_{1:J} \in \rset^d$, exact sampling from the conditional
distributions on $\rset^{d_j}$
\begin{multline*}
{\boldsymbol \tau} \mapsto \pi_j({\boldsymbol \tau} \vert \paramvect) \propto
\exp(-f(\paramvect_{1:j-1},{\boldsymbol \tau},\paramvect_{j+1:J}) \\   - \sum_{i=1}^{I_j}
g_{i,j}(\A_{i,j} {\boldsymbol \tau}) ) \1_{\Dset}(\paramvect_{1:j-1},{\boldsymbol \tau},\paramvect_{j+1:J})
\end{multline*}
for $j=1, \ldots, J$, is not always explicit, we propose a
Metropolis-within-Gibbs strategy; see \cite{chan:geyer:1994}.  The
pseudo-code of the so-called {\tt Gibbs Blockwise Proximal} sampler is given
by \autoref{algo:gibbsPGD} in the case of a systematic scan order of
the $J$ components (our method easily extends to other scan orders;
details are left to the reader): at each iteration $\# (n+1)$, and for
each block $\# j$, \textit{(i)} sample at random $i_j \in \{1, \ldots,
I_j\}$, \textit{(ii)} sample a candidate from
$\mathcal{N}_{d_j}(\stepalgo_{i_j,j}(\varthetavect), \C_{i_j,j})$ where
$\varthetavect \eqdef (\paramvect^{n+1}_{1:j-1}, \paramvect^n_{j:J}) \in \rset^d$
is the current value of the chain, and \textit{(iii)} accept/reject
this candidate via a Metropolis step targeting the distribution
$\pi_j(\cdot \vert \varthetavect)$.
\begin{algorithm}[htbp]
  \KwData{ $d_j \times d_j$ positive definite matrices $\C_{i,j}$;
    $\pas_j >0$; a positive integer $N_{\mathrm{max}}$; $\param^0 \in
    \Dset$} \KwResult{A $\Dset$-valued sequence $\{\paramvect^n, n \in
    \{0, \ldots, N_{\mathrm{max}} \} \}$} Set $\varthetavect = \paramvect^0$
  \; \For{$n=0, \ldots, N_{\mathrm{max}}-1$}{ \For{$j = 1, \ldots,
      J$}{ Sample $i_j \in \lbrace 1, \ldots, I_j \rbrace$ with
      probability $1/I_j$\; Sample $\boldsymbol{\xi}^{n+1}_{j} \sim
      \mathcal{N}_{d_j}(\0mat_{d_j}, \C_{i_j,j})$ \; Set
      $\paramvect_j^{n+\frac{1}{2}} = \stepalgo_{i_j,j}(\varthetavect) +
      \boldsymbol{\xi}^{n+1}_{j}$\; Set $\paramvect^{n+1}_j =
      \paramvect^{n+\frac{1}{2}}_j$ with probability \label{line:AR}
 \[
1 \wedge \frac{\pi_j(\paramvect^{n+\frac{1}{2}}_j \vert \varthetavect)}{\pi_j(\paramvect^n_j \vert \varthetavect)}
\frac{q_{i_j,j}((\varthetavect_{1:j-1},\paramvect^{n+\frac{1}{2}}_j, \varthetavect_{j+1:J}), \paramvect_j^n)}{q_{i_j,j}(\varthetavect, \paramvect_j^{n+\frac{1}{2}})}
\]    and $\paramvect^{n+1}_j = \paramvect^n_j$ otherwise \;
    Update $\varthetavect = (\paramvect^{n+1}_{1:j}, \paramvect^{n}_{j+1:J})$} 
  }
  \caption{ Gibbs  Blockwise Proximal sampler \label{algo:gibbsPGD}}
\end{algorithm}

The {\tt Gibbs PGdec} and the {\tt Gibbs PGdual} samplers correspond
to \autoref{algo:gibbsPGD} applied resp. with $\stepalgo_{i,j} =
\stepalgo_{i,j}^{\mathrm{PGdec}}$ and $\stepalgo_{i,j} =
\stepalgo_{i,j}^{\mathrm{PGdual}}$.

\section{Covid-19 data}
\label{sec:data}
 To illustrate, assess, and compare the relevance and performance of the
Monte Carlo procedures for credibility interval estimation
proposed here, use is made of real
Covid-19 data made available by the \emph{Johns Hopkins
University}\footnote{
\href{https://coronavirus.jhu.edu/}{https://coronavirus.jhu.edu/}}. 
The
repository\footnote{\href{https://raw.githubusercontent.com/CSSEGISandData/COVID-19/master/csse_covid_19_data/csse_covid_19_data/csse_covid_19_time_series/}
{https://raw.githubusercontent.com/CSSEGISandData/COVID-19/master/csse$\_$covid$\_$19$\_$time$\_$series/}}
collects, impressively since the outbreak of the pandemic, 
daily new infections and new death counts from the National
Public Health Authorities of 200+ countries and territories of the
world.  
Counts are updated on a daily basis since the early stage of
the pandemic (Jan. 1st, 2020) until today.
This repository thus provides researchers with a remarkable dataset to analyze the pandemic. 

As mentioned in the Introduction section, because of the sanitary crisis context, data made available by most National
Public Health Authorities in the world are of very limited quality as they
are corrupted by outliers and missing or negative counts.
Data quality also varies a lot across countries or even within a given country depending on the phases and periods of the pandemic.

The present work uses the new infection counts only, as the estimation
of the space-time evolution of the pandemic reproduction number $\R$ is
targeted.

A mild and non-informative preprocessing is applied to data by replacing negative counts by a null value.

 \section{Monte Carlo sampler assessments}
\label{sec:assessment}
This section aims to assess and compare the performance for
several variations of the samplers {\tt PGdec} and {\tt PGdual}
introduced in \autoref{sec:sampler}, using the real Covid19 data
described in \autoref{sec:data}. The target distribution is $\pi_\Zvect$, given by \eqref{eq:target:covid}.
\vskip-2mm

\subsection{Assessment set-up} 
\label{sec:setup}
Performance
assessments were conducted on data from several countries and for
different periods of interests.
For space reasons, they are reported only for the United Kingdom and for a recent time
period (Dec. 6th, 2021 to Jan. 9th, 2022), with
corresponding $\Zvect$ in Fig.~\ref{fig:CIsbb}[top right plot].

Following \cite{Pascal2021}, we set $\lambdaO = 0.05$ and $\lambdatime
= 3.5 \times \sqrt{6} \, \sigma_\Zvect/4$ with $\sigma_\Zvect$ the
standard deviation of $\Zvect$.  Samplers are run with $N_\mathrm{max}
= 1e7$ iterations, including a {\it burn-in phase} of $3e6$
iterations.  Except for the plots in
Fig.~\ref{fig:asymptotique:UK}[first row], reported performance are
computed by discarding the points produced during the burn-in phase.
The initial values $\Z_{-\tau_\phi+1}, \ldots, \ \Z_{-1}, \Z_0$ are set
to the observed counts. For each algorithm, performances are computed
from averages over 15 independent runs.  Initial points consist of
random perturbations around the non-informative point $\Rvect^0 \eqdef
(1, \ldots, 1)^\top \in \rset^T$ and $\Ovect^0 \eqdef (0, \ldots,
0)^\top \in \rset^T$.  For all samplers, $\paramvect$ is seen as a
two-block vectors $(\paramvect_1=\Rvect, \paramvect_2=\Ovect)$; hence $J=2$. For {\tt PGdec}, $g$ is decomposed as in \Cref{ex:PGDcase2};
for {\tt PGdual}, it  is decomposed as in \Cref{ex:PGDcase1}.

\subsection{Samplers}
 \label{ssec:samplers}
\noindent  {\bf {\tt PGdec} and {\tt Gibbs PGdec}.}  These two samplers are run by
decomposing $g$ as described in \Cref{ex:PGDcase2}. This yields two
different drift functions for the blocks $\Rvect$ and $\Ovect$;
see \Cref{ex:PGDcase5}. For the $\Rvect$-part, the drift functions are
defined, for $i=1,2,3$, as:
\begin{multline}
\stepalgo_{i,1}^{\mathrm{PGdec}} (\paramvect) \eqdef   \left( \I_T - \A_{i,1}^\top \A_{i,1}
\right) \left( \Rvect - \pas_1 \nabla_1 f_\Zvect(\paramvect) \right) \\ \,  +
\A_{i,1}^\top \, \prox_{\pas_1 \lambdatime \| \cdot \|_1} \left(\A_{i,1}
\Rvect - \pas_1 \A_{i,1} \nabla_1 f_\Zvect(\paramvect) \right), \label{eq:driftPGD:R:covid}
\end{multline}
{obtained from \eqref{eq:decomposition:mu} with
$\A_{i,1} \A_{i,1}^\top$ the identity matrix}. For the
$\Ovect$-part, 
\begin{equation}\label{eq:driftPGD:O:covid}
\stepalgo_{1,2}^{\mathrm{PGdec}}(\paramvect) \eqdef \prox_{\pas_2 \lambdaO \| \cdot \|_1}\left(
\Ovect - \pas_2 \nabla_2 f_\Zvect(\paramvect)\right).
\end{equation}
$\nabla_1$ (resp. $\nabla_2$) denotes the gradient
operator w.r.t $\Rvect$ (resp. $\Ovect$).  

\noindent  {\bf {\tt PGdual} and {\tt Gibbs PGdual}.}  These two samplers are run by decomposing $g$ as described in
\Cref{ex:PGDcase1}. Consequently, for the $\Rvect$-part, the drift
function is given by
\begin{equation}\label{eq:driftPGdual:R:covid}
\stepalgo_{1,1}^{\mathrm{PGdual}}(\paramvect) \eqdef \barD2^{-1} \, \prox_{\pas_1 \lambdatime
  \| (\cdot)_{3:T} \|_1 } \left( \barD2 \Rvect - \pas_1  \barD2^{-\top} \nabla_1 f_\Zvect(\paramvect)\right)
\end{equation}
where $\barD2 \eqdef \barDinv$ (see \Cref{ex:PGDcase3}, the sampler is
  referred to as {\tt PGdual Invert (I)}) or $\barD2 \eqdef \barDo$
  (see \Cref{ex:PGDcase4}, the sampler is referred to as {\tt PGdual
  Ortho (O)}); for the $\Ovect$-part, for both {\tt PGdual I} and {\tt
  PGdual O},
\begin{equation}\label{eq:driftPGdual:O:covid}
\stepalgo_{1,2}^{\mathrm{PGdual}}(\paramvect) \eqdef \prox_{\pas_2 \lambdaO \| \cdot \|_1 }
\left(\Ovect - \pas_2 \nabla_2 f_\Zvect(\paramvect)\right) \eqsp.
\end{equation}

\noindent  {\bf RW and Gibbs RW.} 
For comparisons, we also run Random Walk-based samplers ({\tt RW})
with Gaussian proposals: {\tt RW} and {\tt Gibbs RW} are defined
respectively from \autoref{algo:PGD} and \autoref{algo:gibbsPGD}, with
$I_1 = I_2 = 1$ and $\stepalgo_{1,j}(\paramvect) = \paramvect$ for
$j=1,2$. 

\noindent  {\bf Covariance matrices $\C_{i,j}$.}
For the $\Rvect$-part (block $j=1$), based on the comment in \autoref{sec:PGdecAndPGdual}, we choose $\C_{1,1} \eqdef 2 \pas_1 \barDinv^{-1}
\barDinv^{-\top}$ for {\tt PGdual I} and {\tt Gibbs PGdual I}; and
$\C_{1,1} \eqdef 2 \pas_1 \barDo^{-1} \barDo^{-\top}$ for {\tt PGdual
  O} and {\tt Gibbs PGdual O}. The {\tt PGdec}-based samplers and the
  {\tt RW}-based samplers are run with the same covariance
  matrices. For {\tt PGdec}, this yields $\C_{i,1} \eqdef
  2 \pas_1 \barDinv^{-1} \barDinv^{-\top}$ for {\tt PGdec I}, {\tt
  Gibbs PGdec I} and $\C_{i,1} \eqdef
  2 \pas_1 \barDo^{-1} \barDo^{-\top}$ for {\tt PGdec O}, {\tt Gibbs
  PGdec O} for the three cases $i=1,2,3$. For {\tt RW},  $\C_{1,1} \eqdef 2 \pas_1 \barDinv^{-1}
\barDinv^{-\top}$ for {\tt RW I} and {\tt Gibbs RW I}; and
$\C_{1,1} \eqdef 2 \pas_1 \barDo^{-1} \barDo^{-\top}$ for {\tt RW O}
  and {\tt Gibbs RW  O}. For {\tt PGdec} and {\tt PGdual}, $\pas_1$ is the same step size as
  in \eqref{eq:driftPGD:R:covid} and
\eqref{eq:driftPGdual:R:covid}. For the $\Ovect$-part (block $j=2$), we choose $\C_{1,2} \eqdef 2\pas_2 \I_T$;  for {\tt PGdec} and {\tt PGdual}, $\pas_2$ is the same step size as in \eqref{eq:driftPGD:O:covid} and
\eqref{eq:driftPGdual:O:covid}.
  
\noindent  {\bf Twelve sampling strategies.} The present section will compare twelve different samplers, 
constructed from the three drift  strategies ({\tt PGdec,
PGDual, RW}), times two families ({\tt Metropolis-Hastings, Gibbs}),
times two choices for the covariance matrix $\C_{i,1}$ ({\tt I, O}).

\noindent  {\bf Step sizes.} 
Different strategies are compared for the definition of the step sizes
$(\pas_1, \pas_2)$. All of them consist in adapting the step sizes
during the burn-in phase in such a way that the mean
acceptance-rejection rate reaches approximately $0.25$ (which is
known to be optimal for some {\tt RW} Metropolis-Hastings samplers,
\cite{gelman:gilks:roberts:1997}). We observed that convergence occurs
before $5e5$ iterations, see Fig.~\ref{fig:asymptotique:UK}[row 1,
columns 2 and 3];  the  proposed samples are all rejected and the chain
does not move from its initial point during the first
iterations, when the step size is too large.  At the end of the
burn-in phase, the step sizes are frozen and no longer adapted. For the
Metropolis-Hastings samplers {\tt PGdec}, {\tt PGdual}, and {\tt RW},
$\pas_1$ is adapted and $\pas_2 \eqdef \pas_1$ (case {\tt PGdec}) of $\pas_2 \eqdef (\lambdatime/\lambdaO)^2 \pas_1$ (cases {\tt PGdual}, {\tt RW}). For
the Gibbs samplers, the acceptance-rejection steps are specific to
each block $\Rvect$ and $\Ovect$.  A first consequence is that a move
on one block can be accepted while the other one is not; this may
yield larger step sizes $(\pas_1, \pas_2)$ which in turn favor larger
moves of the chains. A second consequence is that we use the
acceptance-rejection rate for the $\Rvect$-part (resp. for the
$\Ovect$-part) to adapt $\pas_1$ (resp. $\pas_2$): the two step sizes
have their own efficiency criterion.

\subsection{Performance assessment criteria}
The sampler performances are assessed and compared using three
different criteria, see Fig.~\ref{fig:asymptotique:UK}.

\noindent  {\bf Distance to the MAP.} 
We compute the normalized distance $\|
\Rvect^n - \widehat{\Rvect}_{\mathrm{MAP}} \| /
\|\widehat{\Rvect}_{\mathrm{MAP}} \|$ where $\widehat{\Rvect}_{\mathrm{MAP}}$
denotes the MAP estimator, ({computed as in
\cite{Pascal2021}}): it is displayed vs. the iteration index $n$,  in the burn-in phase (row 1) and  after the burn-in phase (row
2).  This criterion quantifies the ability of the chains to perform a
relevant exploration of the distribution: The chains have to visit the
support of $\pi_\Zvect$, they show better ergodicity
rates when they are able to rapidly escape from low density regions to
move to higher density regions.  Paths start from a
non-informative initial point, considered as a point in a low density
region.  
This criterion permits to quantify a relevant behavior
of the Markov chain when
\textit{(i)} it drifts rapidly towards zero during the
burn-in phase and \textit{(ii)} it fluctuates in a large neighborhood
of zero after the burn-in phase.

\noindent  {\bf Autocorrelation function (ACF).} 
We then compute the ACF for each of the $2T$ components of the vector
$\paramvect$ along the Markov path, for a lag from $1$ to $1e5$. On
row 3, we report the mean value, over these $2T$ components, of the
absolute value of the ACF versus the lag. This criterion 
quantifies a relevant behavior of the Markov chain when
 it converges rapidly to zero; it is indeed related to
the effective sample size of a Markov chain (see
\cite{robert:casella:2005}) and to the mixing properties of the chain
(see e.g. \cite[Theorem 17.4.4 and Section
  17.4.3.]{meyn:tweedie:2009}). For example, a weaker ACF means that
  less iterations of the sampler are required to reach a given
  estimation accuracy.
  
\noindent  {\bf Gelman-Rubin (GR) statistic.} 
Finally, we compute the GR statistic (see
\cite{gelman:rubin:1992,brooks:gelman:1998}) which quantifies a
relevant behavior of the Markov chain when it converges rapidly to
one. It measures how the sampler forgets its initial value and
provides homogeneous approximations of the target distribution
$\pi_\Zvect$ after a given number of iterations. On row
4, we report this statistic versus the iteration index.

\begin{figure}[tbp]
\includegraphics[width=1\linewidth]{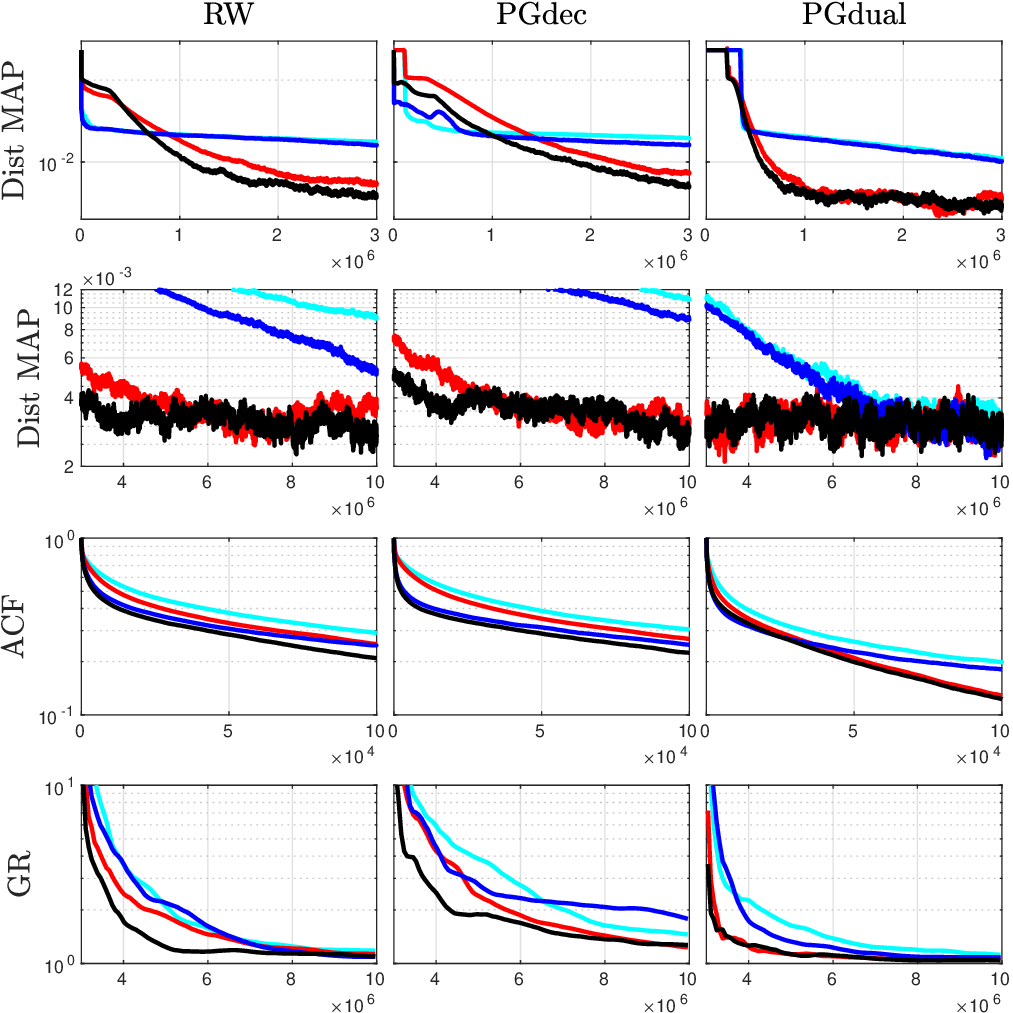}
\caption{\label{fig:asymptotique:UK} Displayed: the normalized
  distance to $\widehat{\Rvect}_{\mathrm{MAP}}$ versus the iteration
  index, during the burn-in phase (row 1) and after the burn-in phase
  (row 2); the ACF criterion versus the lag (row 3) and the GR
  criterion versus the iteration index (row 4); for the {\tt RW}-based
  samplers (column 1), {\tt PGdec}-based samplers (column 2) and {\tt
  PGdual}-based samplers (column 3). 
  For each {\tt Algo} in \{ {\tt RW}, {\tt PGdec}, {\tt PGdual} \}: 
    {\textcolor{cyan}{{\tt Algo I} is in cyan}},  
  {\textcolor{red}{{\tt Algo O} is in red}}, 
  {\textcolor{blue}{{\tt Gibbs Algo I} is in blue}}
  and {\tt Gibbs Algo O} is in black.}
\end{figure}

\subsection{Performance comparisons.}
\label{ssec:performances}
\noindent  {\bf Covariance matrices.} 
Normalized Distance to MAP indices (Fig.~\ref{fig:asymptotique:UK}[rows 1 \& 2]) and GR indices (Fig.~\ref{fig:asymptotique:UK}[row 4]) show 
that, for all algorithms {\tt RW}, {\tt PGdec}, and {\tt PGdual}, the strategy {\tt O} is more efficient than the strategy {\tt I}, as corresponding indices
decay more rapidly (to $0$ and $1$ respectively) for the formers than
for the latters. 
A plausible explanation is that the smaller condition number of $\barDo$ compared to $\barDinv$ is crucial to improve the mixing behavior of the Markov chains.\\
\noindent  {\bf Metropolis-Hastings vs. Gibbs samplers.} The evolution of the ACF criteria (Fig.~\ref{fig:asymptotique:UK}[row 3])
shows that the Gibbs strategies are globally more efficient than the
Metropolis-Hastings ones.  Further, GR indices confirm better
efficiency of the Markov chains obtained from {\tt Gibbs O}
strategies.\\
\noindent {\bf Drift functions.} The benefit of using functions $\stepalgo$ miming optimization
algorithms to drift the proposed points towards the higher probability
regions of $\pi_\Zvect$ is clearly quantified in Fig.~\ref{fig:asymptotique:UK} across all performance indices. 

During the burn-in phase, the normalized distance to MAP indices decay
toward $0$ significantly faster for all {\tt PGdual}-based algorithms,
compared to {\tt RW}-based ones.  Also, the {\tt PGdual O} algorithms
reach an expected plateau early in the burn-in phase, while this is
barely the case at the end of the burn-in phase for {\tt RW O}
algorithms.
 
After the burn-in phase (second row), the samplers using
optimization-based drift functions $\stepalgo$ show better behavior
after reaching the high density regions as they permit a broader and
more rapid exploration of the support of the distribution around its
maximum, with large amplitude moves from
$\widehat{\Rvect}_{\mathrm{MAP}}$, and faster returns to
$\widehat{\Rvect}_{\mathrm{MAP}}$.  This is notably clearly visible
for the {\tt Gibbs PGdual O} strategy.

ACF indices decay more rapidly for the {\tt PGdual} strategies than
for the {\tt RW} ones, irrespective of the choice of the covariance
matrix or of the Metropolis-Hastings or Gibbs versions.  Also, ACF
indices for {\tt PGdual} algorithms are less sensitive to the choice
of the covariance matrices than the {\tt RW} ones.

Finally, the GR statistic indices (row 4) clearly illustrates that
Markov chains produced by {\tt PGdual} samplers have better mixing
properties, compared to others, showing hence sensitivity to the
choice of the initial point. \\
\noindent  {\bf Optimal sampler.} Combined together, these observations, globally consistent with those stemming from other countries or time periods, lead to the following generic comments. 

While yielding essentially equivalent performance, across time periods
and countries, the choice of the covariance matrices (algorithms {\tt
O} and {\tt I}) significantly outperforms algorithms relying on
Identity covariance matrices (not reported here as showing poor
performances). This non trivial construction actually stemmed from the
thorough and detailed mathematical analysis conducted
in \autoref{sec:PGdecAndPGdual}. \Cref{theo:driftdual} advocates to
perform the augmentation of the $(T-2) \times T$ matrix $\D2$ into a
$T \times T$ invertible matrix $\barD2$, by adding two rows which are
orthogonal to the rows of $\D2$.

The observation that Gibbs samplers show better performances may stem from the fact that they
benefit from larger values of the step sizes learned on the fly by the
algorithms which favor larger jumps when proposing
$\paramvect^{n+1/2}$ from $\paramvect^n$ and imply better mixing
properties.  

Finally, the {\tt PGdual} algorithms show homogeneous performances for
example when varying the covariance matrix, and are thus less
sensitive to parameter tuning, an important practical feature.  Also,
the {\tt PGdual} algorithms show systematically better performances
than the {\tt PGdec} ones. This may result from the definition of
$\stepalgo^{\mathrm{PGdec}}$ which, because of the block-splitting
approach (see \eqref{eq:driftPGD:R:covid}), uses only partial
information on $\pi_\Zvect$ at each iteration.

As an overall conclusion, systematically observed across all studied
time periods and countries, the {\tt Gibbs PGdual O} algorithm,
devised from the careful analysis of the theoretical properties of the
distributions defined by
A\autoref{ass:smooth}-A\autoref{ass:nonsmooth}, is consistently found
to be the most efficient strategy for a relevant assessment of the
Covid19 pandemic time evolution.

\section{Credibility intervals for $\R$} 
\label{sec:results}
\noindent {\bf Goal and set-up.} The present section aims to illustrate the relevance of credibility interval-based estimations for the reproduction number $\R$ and for the outliers $\O$ from real Covid19 data. The statistical model for $(\Rvect, \Ovect)$ is described in \Cref{sec:statistic:model}.
Because it is of greater interest for epidemiologists, the
credibility intervals are translated into credibility intervals for
the denoised counts $\Zvect^{(D)}$ by their simple subtraction to the
original counts $\Zvect$, i.e., intuitively $\Zvect^{(D)}
= \Zvect-\Ovect$.

Credibility interval-based estimations are reported for {\tt Gibbs PGdual O} sampling scheme only, as 
Section~\ref{sec:assessment} established that it achieved the best performance amongst the twelve sampling schemes tested. 
Estimations are computed for a time period of five weeks ($T=35$ days), which corresponds to a few typical pandemic time scales, of the order of $7$ days, induced by the
serial interval function $\Phi$, cf. Section~\ref{sec:model}.
This period is set to a recent phase of the pandemic (Dec. 13th, 2021 to Jan. 17th, 2022). 
Estimations are reported for several countries, arbitrarily chosen as representatives of the pandemic across the world, 
but conclusions are valid for most countries.

\noindent {\bf Computation of the credibility intervals.} Credibility intervals are computed as follows: 
For each day $t \in [T_1, T_2]$ in the period of interest $[T_1,
T_2]$, the chosen sampling scheme ({\tt Gibbs PGdual O}) outputs
$7.10^6$ points of a Markov chain approximating the 
distribution of $\R_t$~; Quantiles of that distribution are estimated
using the empirical cumulative distribution function~; For a chosen
credibility level of $1-\alpha$, the upper and lower limits of the
credibility intervals are defined by the empirical $1-\alpha/2$ and
$\alpha/2$ quantiles.  For illustration purposes, $1-\alpha$ is set
here to $95\%$.  The same procedure is applied to produce credibility
intervals for the outliers $\O_t$.  Finally, credibility intervals for
the estimated denoised new infection counts $\Zvect^{(D)}$ are obtained by
subtracting the credibility intervals for $\O_t$ to the  count $\Z_t$.  Fig.~\ref{fig:CIsbb} reports, top
plots, the daily counts of new infections (black lines) to which are
superimposed the $95\%$ credibility interval-based estimations for the
denoised counts, $\Zvect^{(D)}$, (red pipes).  Fig.~\ref{fig:CIsbb} further
reports, bottom plots, the $95\%$ credibility interval-based
estimations for $\R$. 

\begin{figure}[t]
\centerline{
\includegraphics[width=0.5\linewidth]{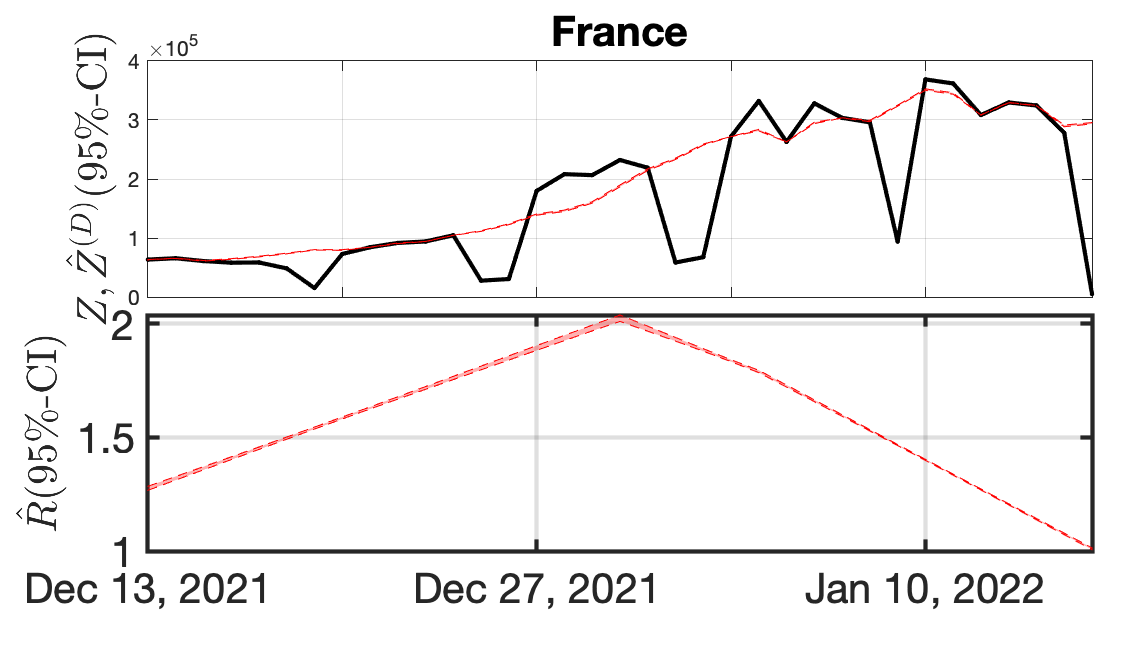}
\includegraphics[width=0.5\linewidth]{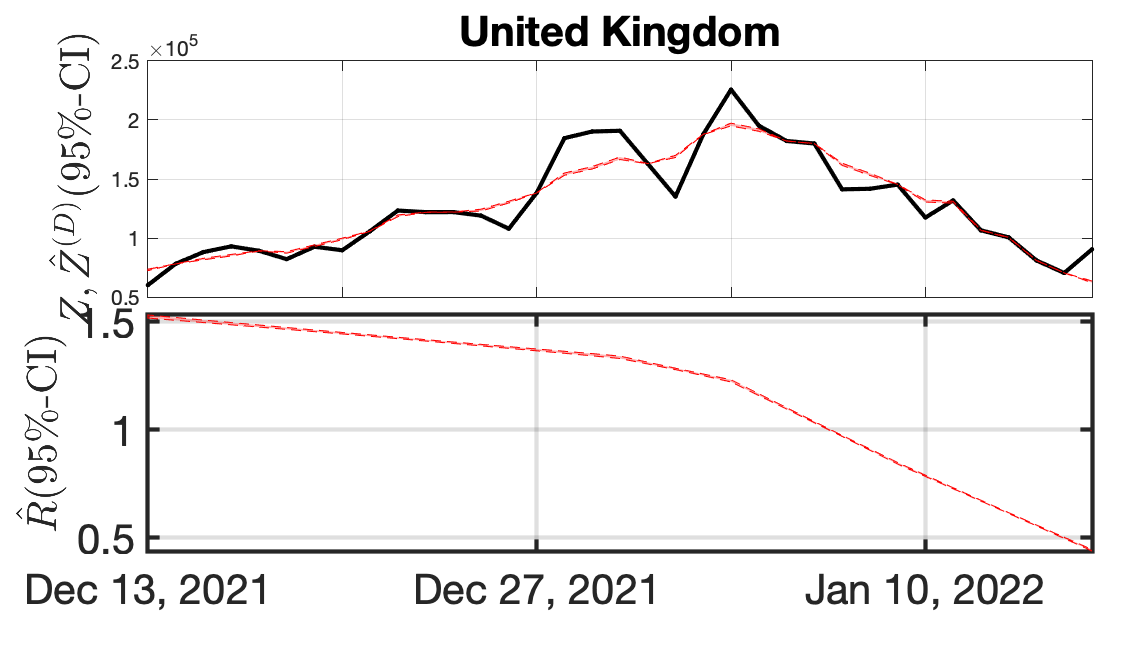}
}
\centerline{
\includegraphics[width=0.5\linewidth]{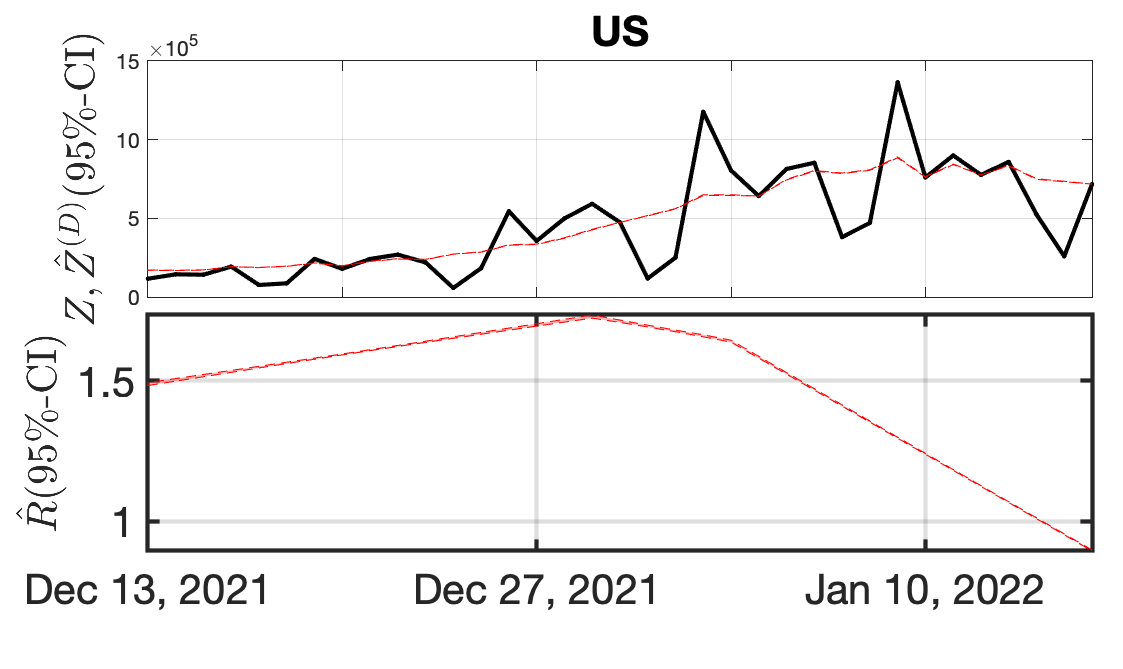}
\includegraphics[width=0.5\linewidth]{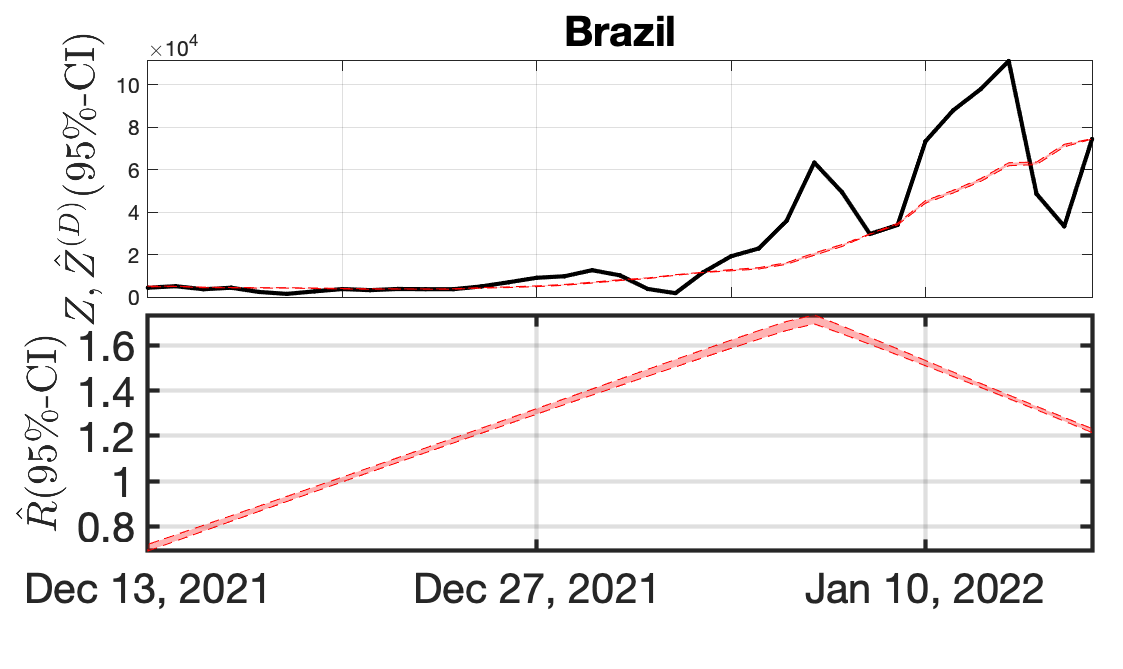}
}
\centerline{
\includegraphics[width=0.5\linewidth]{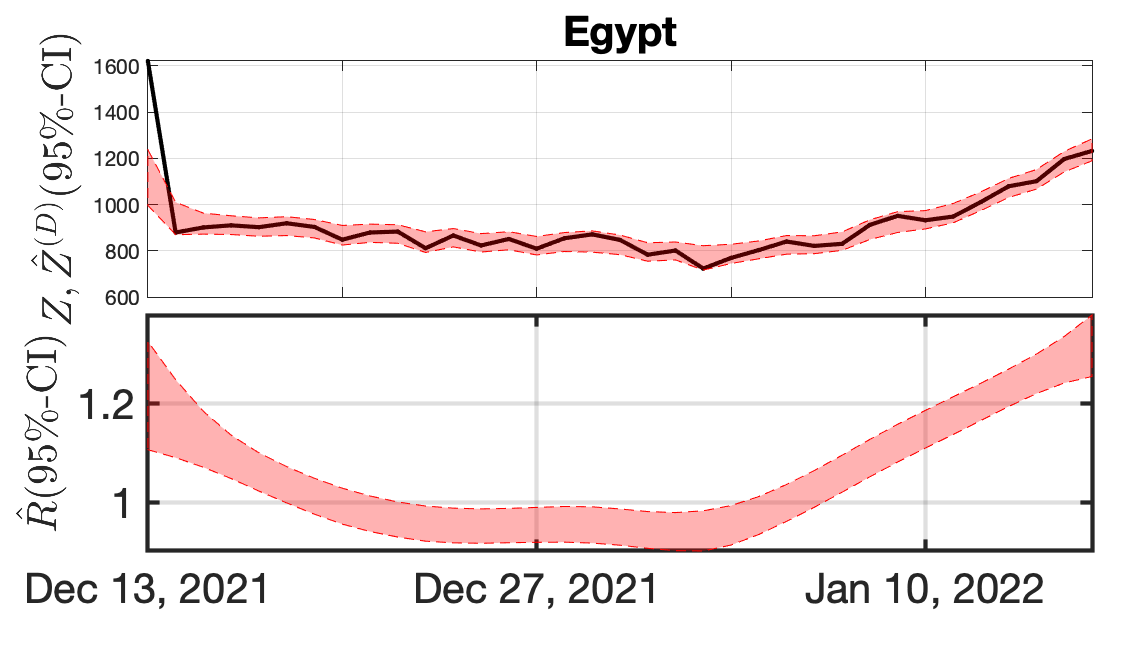}
\includegraphics[width=0.5\linewidth]{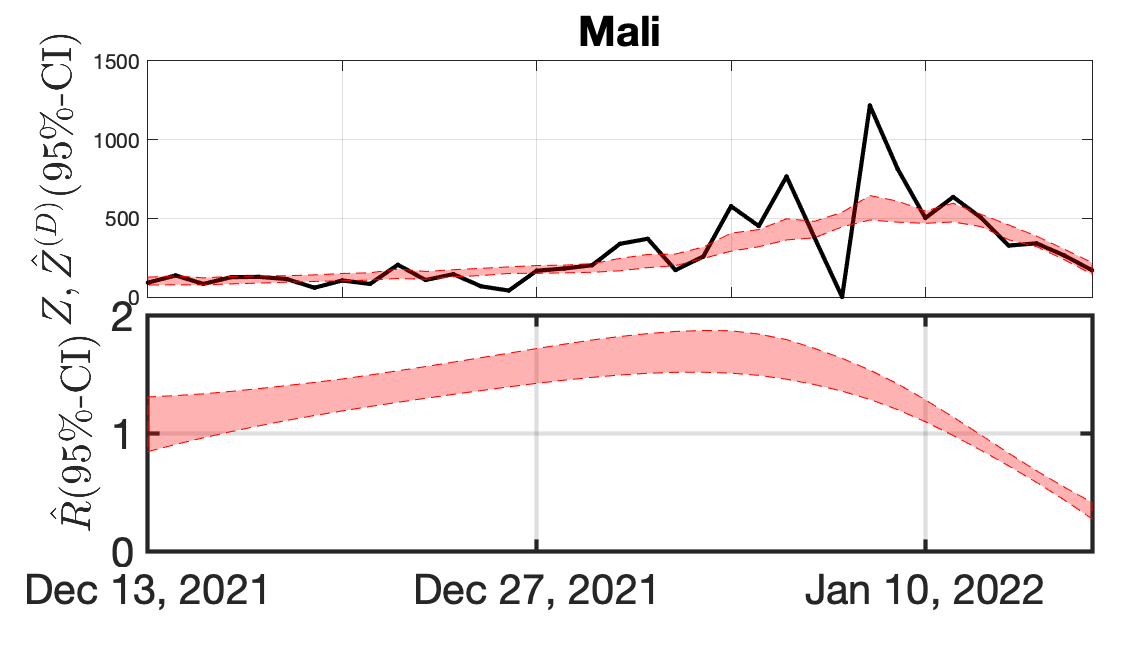}
}
\centerline{
\includegraphics[width=0.5\linewidth]{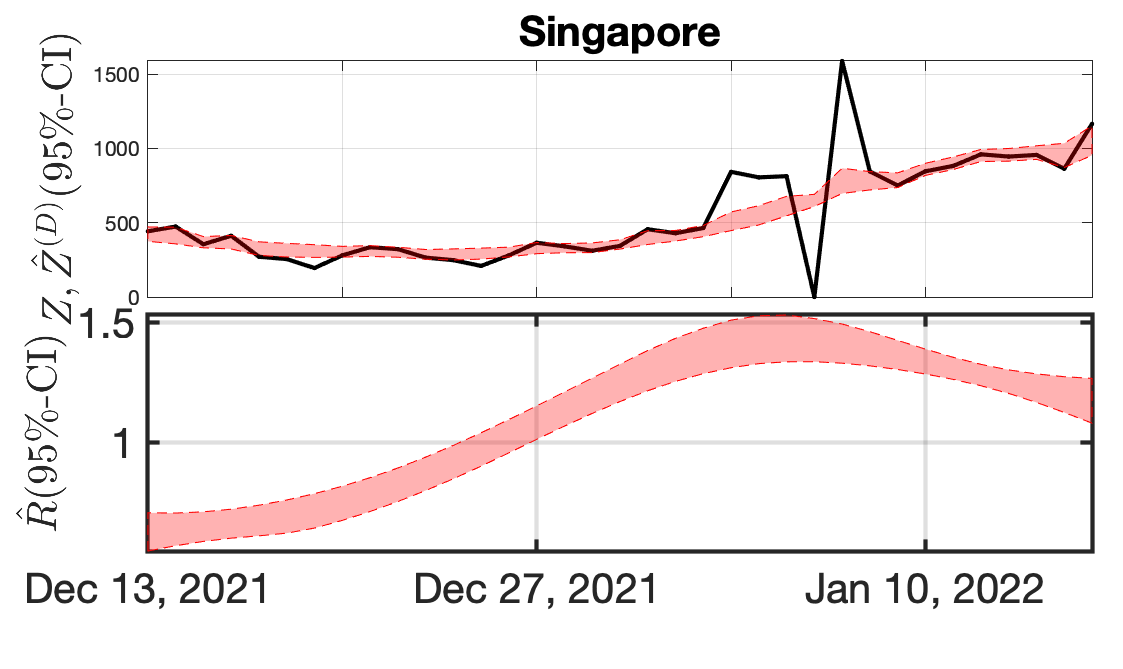}
\includegraphics[width=0.5\linewidth]{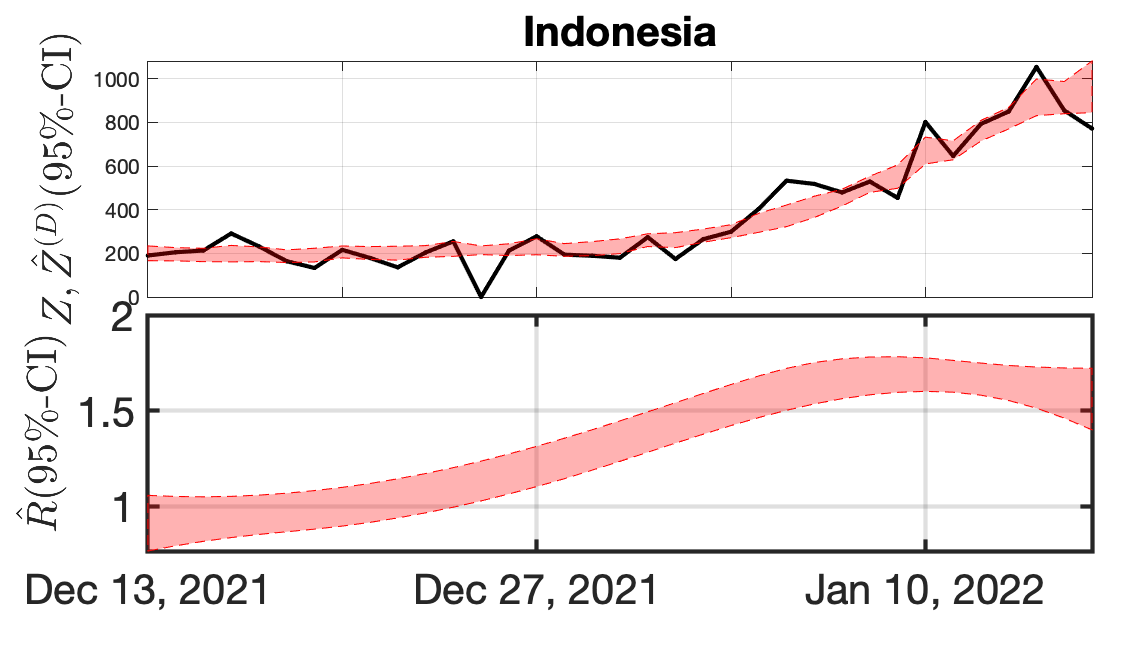}
}
\caption{\label{fig:CIsbb} {\bf {\tt Gibbs PGdual O} sampler-based estimation of the time evolution of $\R$ for a recent 5-week time period and for several countries.} 
Top rows: Raw daily new infections counts $\Zvect$ (black) and
estimates of the denoised counts $\Zvect^{(D)}$, obtained by
subtracting the $95\%$-Credibility interval estimates of the outliers
to the raw new infection counts $\Zvect$~; Bottom rows:
$95\%$-Credibility interval estimated of $\R_t$.
}
\end{figure}

\noindent {\bf Relevance of credibility interval-based estimations.} Fig.~\ref{fig:CIsbb}, together with the examination of equivalent plots for other countries, yields the following generic conclusions.

The estimated denoised counts show far smoother evolution along time
compared to the raw counts, hence providing far more realistic
assessments of the intensity of the pandemics.  Notably, for most
countries, the zero or low counts, associated with week-ends or
non-working days, followed by days with over-evaluated counts by
compensation, are smoothed out by the outlier estimation procedure,
while the values of the counts for the regular (or non corrupted) days
are left unchanged.  This is the direct benefit of the nonlinear
filtering procedure underlying the estimation formulation
in \eqref{eq:target:covid}, as opposed to traditional denoising
procedures performed by classical linear filtering (such as moving
average).

For the credibility intervals for $\R$, their sizes range, depending on
countries and time periods, from below $1 \%$ to above $10 \%$.  A
careful examination shows that the credibility interval size is mostly
driven by data quality: the credibility interval size increases when
outliers are detected.  Further, for a given country, the size of the
credibility intervals varies only mildly along time over a five-week
period.  Changes in size are often associated with changes in the
trends of the estimates of $\R$, or with the occurrence of outliers.
These credibility intervals provide hence a relevant assessment not
only of the intensity of the pandemics, but also of the confidence
that can be granted to this assessment, by providing epidemiologists
with a range of likely values of $\R$, rather than a single value.
This permits to compare the evolution of the pandemics across several
countries on a better scientifically grounded basis.

These credibility interval-based estimations permit a double analysis
of the pandemic: They permit retrospectively to evaluate the impacts
of sanitary measures on the pandemic evolution. Additionally, the
smooth nature of the estimation of $\R$ (close to piecewise
linear) performs an implicit short term forecast (or a \emph{nowcast})
of the evolution of the pandemic intensity: For instance, for several
countries (e.g., France, Mali, Brazil, Singapore,\ldots) the estimate
of $\R$ is decreasing for the last $5$ to $10$ days of the studied
period, predicting that daily new infections will reach a
maximum of the current wave within the coming days and then will start
to decrease.

These credibility interval estimates can be complemented with other estimates such as the Maximum, median or Mean a Posteriori (cf. e.g., \cite{artigas2021credibility,abry:etal:EMBC}). 

Finally, let us emphasize that these estimates, denoised counts and credibility intervals, are obtained using a single and same set of hyperparameters $\lambdatime$ and $\lambdaO$ common to all countries.

\section{Conclusions}
\label{sec:conclusions}
 The proposed tools perform a relevant credibility interval-based
estimation of the Covid19 reproduction number and denoised new
infection counts, by combining a statistical modeling of the time
evolution of the pandemic with Monte Carlo Metropolis 
sampling strategies.  Robustness against the low quality of the Covid19 is
achieved by engineering the Bayesian model to impose sparsity in the
changes of a smooth time evolution for the reproduction number and in
the outlier occurrences, modeling data corruption.  This is obtained
at the price of the non-differentiability of their a posteriori
distribution, thus precluding the use of the classical Metropolis
Adjusted Langevin Algorithm to produce credibility intervals.  This
lead us to propose several Proximal-Gradient Monte Carlo algorithms
tailored to the sampling of non-differentiable 
distributions, that also constitute valid sampling schemes for a much
broader range of applications than that of the strict Covid19 pandemic monitoring, e.g., in image processing or more
generally, in any Bayesian inverse problems with several nonsmooth
priors.

Estimation performances were assessed and compared on real Covid19
data, using a set of well-selected indices quantifying the efficiency
of these different sampling schemes.

Finally, it was shown for several countries and for a recent five-week
 period that the achieved credibility interval-based estimations
of both denoised new infection counts and reproduction number provide
practitioners with an efficient and robust tool for the actual and
practical monitoring of the Covid19 pandemic.  Such estimates are
updated on a daily basis on the authors's web-pages.  Automated and
data-driven estimations of the hyperparameters $\lambdatime$ and
$\lambdaO$ are under current investigations.

In an effort toward reproducible research and open science, MATLAB
codes implementing {\tt PGdec}, {\tt PGdual} for both
Metropolis-Hastings and Gibbs versions are made publicly available at
\url{https://github.com/gfort-lab/OpSiMorE}.

%
%
%
%
%

\clearpage
\newpage

\begin{center}
 \Large  Supplementary material
\end{center}

\section{Proof of \cref{prop:MAPexists}}
\label{sec:SM:proofMAP}
\subsection{Notations}
\def\tildeRvect{\widetilde{\Rvect}} Let $\Dset_\Zvect$ be the subset
of $(\rset_+)^T \times \rset^T$ given by \eqref{eq:const}. To shorten
the notations, we will write $\mathcal{I}_t(\paramvect)$ instead of $\mathcal{I}_t(\paramvect,\Zvect)$:
\begin{equation}\label{eq:intensity:app}
\mathcal{I}_t(\paramvect) \eqdef \R_t \Zphi_t + \O_t, \qquad \Zphi_t \eqdef
\sum_{u=1}^{\tau_\phi} \Phi_u \Z_{t-u};
\end{equation}
and $\pi$ instead of $\pi_\Zvect$ (see
\eqref{eq:target:covid}) . Observe that $- \ln \pi(\paramvect)$ is
equal to $+\infty$ for $\paramvect \notin \Dset_{\Zvect}$ and for
$\param \in \Dset_{\Zvect}$
\begin{align*}
-\ln \pi(\paramvect) & = C_\pi + \sum_{t=1}^T\left\{
\mathcal{I}_t(\paramvect) - \Z_t \1_{\Z_t >0 } \ln \mathcal{I}_t(\paramvect) \right\}
\\ &+ \lambdatime \| \D2 \Rvect \|_1 + \lambdaO \| \Ovect\|_1;
\end{align*}
for some normalizing constant $C_\pi$. By convention $0 \ln 0 = 0$.
Define the $T \times T$ invertible matrix $\barD2$ and compute its
inverse:
\begin{align*}
& \barD2 \eqdef \left[\begin{matrix}
  1 & 0 & 0  & \cdots & 0 \\
  -2 & 1 & 0 & \cdots & 0 \\
  & & \D2 & & 
  \end{matrix} \right]; \\
& \barD2^{-1} \eqdef \left[\begin{matrix} 1 & 0 & 0 & \cdots & 0 \\ 2 &
    1 & 0 & \cdots & 0 \\ \cdots \\ T & (T-1) & \cdots & 2 & 1
  \end{matrix} \right].
\end{align*}
Finally, define the criterion
\begin{multline} \label{criterion}
  \mathcal{C}(\tildeRvect, \Ovect) \eqdef \lambdatime \|
  \tildeRvect_{3:T} \|_1 + \lambdaO \| \Ovect\|_1 \\ +
  \sum_{t=1}^T\left\{ \mathcal{I}_t(\barD2^{-1} \tildeRvect, \Ovect) - \Z_t
  \1_{\Z_t>0} \ln \mathcal{I}_t(\barD2^{-1} \tildeRvect, \Ovect) \right\},
\end{multline}
for $(\tildeRvect, \Ovect)$ in the set $\widetilde \Dset_{\Zvect}
\eqdef \{ (\tildeRvect, \Ovect) \ \text{s.t.} \ (\barD2^{-1}
\tildeRvect, \Ovect) \in \Dset_{\Zvect} \}$ and $+\infty$ otherwise.
We have for any $(\tildeRvect, \Ovect) \in \widetilde \Dset_{\Zvect}$
\begin{equation} \label{eq:fromCtopi}
\mathcal{C}(\tildeRvect, \Ovect) = -\ln \pi(\barD2^{-1} \tildeRvect,
\Ovect) - C_\pi.
\end{equation}

\subsection{Existence of a MAP}
We start with two lower bounds on the criterion $\mathcal{C}$ which
will help us to study its behavior on some boundaries of $\widetilde
\Dset_\Zvect$.
\noindent {\bf $\blacktriangleright$ Lower bounds on $\mathcal{C}$.}  Let $(\tildeRvect, \Ovect) \in
\widetilde \Dset_{\Zvect}$. Then
  \begin{equation}\label{eq:minoration:C:1}
\mathcal{C}(\tildeRvect, \Ovect) \geq - \sum_{t=1}^T \ln (\Z_t!) +
\lambdatime \|(\tildeRvect)_{3:T}\|_1 + \lambdaO \| \Ovect \|_1,
\end{equation}
and for any $ \tau \in \{1, \cdots, T \}$,
\begin{multline}\label{eq:minoration:C:2}
\mathcal{C}(\tildeRvect, \Ovect) \geq \left\{ \mathcal{I}_\tau(\barD2^{-1}
\tildeRvect, \Ovect) - \Z_\tau \1_{\Z_\tau >0 } \ln \mathcal{I}_\tau(\barD2^{-1}
\tildeRvect, \Ovect) \right\} \\ - \sum_{t \neq \tau} \ln (\Z_t!).
\end{multline}
\begin{proof}
For any $p >0$ and $z \in \nset$, $p^z \exp(-p) / z! \in \ooint{0,1}$
thus implying that $z \ln p - p -\ln(z!) \leq 0$.  When $p=z=0$, then
$z \1_{z>0} \ln p - p = 0$ and $\ln(z!) = 0$; hence we have
\[
\mathcal{I}_t(\barD2^{-1} \tildeRvect, \Ovect) - \Z_t \1_{\Z_t >0 } \ln
\mathcal{I}_t(\barD2^{-1} \tildeRvect, \Ovect) \geq - \ln (\Z_t!),
\]
from which we obtain \eqref{eq:minoration:C:1} and
\eqref{eq:minoration:C:2}.
  \end{proof}

\noindent {\bf $\blacktriangleright$ Behavior of $\mathcal{C}$ on some
  boundaries of $\widetilde \Dset_\Zvect$.}  Assume that there exist
$t_\star < t_{\star \star}$ in $\{1, \ldots, T \}$ such that
$\Zphi_{t_\star} > 0$ and $\Zphi_{t_{\star\star}} > 0$.
  \begin{enumerate}
\item For any sequence $(\tildeRvect^n, \Ovect^n) \in \widetilde
  \Dset_{\Zvect}$ s.t. $\lim_n \| \tildeRvect^n \|_1 + \lim_n
  \|\Ovect^n\|_1 = +\infty$, we have $\lim_n
  \mathcal{C}(\tildeRvect^n, \Ovect^n) = + \infty$.
\item Let $t \in \{1, \ldots, T \}$ such that $\Z_t>0$.  For any
  sequence $(\tildeRvect^n, \Ovect^n) \in \widetilde \Dset_{\Zvect}$
  s.t. $\lim_n (\barD2^{-1} \tildeRvect^n)_t \Zphi_t + \Ovect^n_t =
  0$, we have $\lim_n \mathcal{C}(\tildeRvect^n, \Ovect^n) = +
  \infty$.
    \end{enumerate} 
\begin{proof} {\it First statement.}
Let $\{(\tildeRvect^n, \Ovect^n), n \geq 0\}$ be a sequence in
$\widetilde \Dset_{\Zvect}$. For the discussions below, remember that
this implies that
\begin{equation}\label{eq:limit:inD}
\lim_n \left( (\barD2^{-1} \tildeRvect^n)_t \Zphi_t + \Ovect_t^n
\right) \geq 0.
\end{equation}
We distinguish two cases. First, either $\|\Ovect^n\|_1$ tends to
infinity or $\|\Rvect^n_{3:T}\|_1$ tends to infinity. In the second
case, these two norms are assumed bounded but $\|\Rvect^n_{1:2}\|_1$
tends to infinity. \\ {\tt $\bullet$ First case.}  Assume first
$\lim_n \{ \|\tildeRvect^n_{3:T}\|_1 + \| \Ovect^n \|_1 \}=
+\infty$. Then, from \eqref{eq:minoration:C:1}, we have $\lim_n
\mathcal{C}(\tildeRvect^n, \Ovect^n) = +\infty$. \\
        \noindent {\tt $\bullet$ Second case.} Now consider the case
        when
  \[
\sup_n \left( \|\tildeRvect^n_{3:T}\|_1 \sup_n \| \Ovect^n \|_1
\right)< \infty, \quad \lim_n \|\tildeRvect^n_{1:2} \|_1 = +\infty.
\]
By definition of $\barD2^{-1}$ , we have for any $t \in \{1, \ldots, T
\}$,
\begin{equation}\label{eq:case2A}
(\barD2^{-1} \tildeRvect^n)_{t} = t \tildeRvect_1^n + (t-1)
\tildeRvect_2^n + \sum_{k=3}^{t} (t-k+1) \tildeRvect_k^n;
\end{equation}
by convention, the last term in the RHS is zero when $t=1,2$.  Under
the assumptions of this second case, $\sup_n | \sum_{k=3}^{t} (t-k+1)
\tildeRvect_k^n| < \infty$ for any $t \in \{3, \ldots, T \}$. We prove
that either $(\barD2^{-1} \tildeRvect^n)_{t_\star}$ tends to infinity,
or $(\barD2^{-1} \tildeRvect^n)_{t_{\star \star}}$ tends to infinity
-- which will imply by \eqref{eq:minoration:C:2} that the criterion
tends to infinity. \\
\noindent {\tt $\bullet$ Subcase 2A.}  Assume that $\lim_n \{t_\star
\tildeRvect_1^n + (t_\star-1) \tildeRvect_2^n \}= +\infty$ (observe
that this limit can not be $-\infty$, as a consequence of the
assumptions of Case 2, \eqref{eq:case2A} and
\eqref{eq:limit:inD}). \\ Apply \eqref{eq:case2A} with $t = t_\star$;
this yields $\lim_n (\barD2^{-1} \tildeRvect^n)_{t_\star} =
+\infty$. Since $\Zphi_{t_\star>0}$ then $\lim_n
\mathcal{I}_{t_\star}(\barD2^{-1}\tildeRvect^n, \Ovect^n) = + \infty$ (see
\eqref{eq:intensity:app}) which implies that
\[
\lim_n \left\{ \mathcal{I}_{t_\star}(\barD2^{-1}\tildeRvect^n, \Ovect^n) - \Z_{t_\star} \ln
\mathcal{I}_{t_\star}(\barD2^{-1}\tildeRvect^n, \Ovect^n) \right\} = +\infty
\]
and then $\lim_n \mathcal{C}(\tildeRvect^n, \Ovect^n) = +\infty$ by
\eqref{eq:minoration:C:2}. \\ {\tt $\bullet$ Subcase 2B.} Assume that
$\sup_n |t_\star \tildeRvect_1^n + (t_\star-1) \tildeRvect_2^n|
<\infty$.  Then, necessarily $\lim_n |\tildeRvect_2^n| = +\infty$
(otherwise, it is the Subcase 2A).  We write
\begin{align*}
  (\barD2^{-1} \tildeRvect^n)_{t_{\star \star}}&- \sum_{k=3}^{t_{\star
      \star}} (t_{\star \star}-k+1) \tildeRvect_k^n \\ & = t_{\star
    \star} \tildeRvect_1^n + (t_{\star \star}-1) \tildeRvect_2^n 
  \\ & = \frac{t_{\star \star}}{t_\star}\left(
  t_\star \tildeRvect_1^n + (t_\star-1) \tildeRvect_2^n \right) +
  \frac{t_{\star \star}}{t_\star}(1 - \frac{t_\star}{t_{\star \star}}
  ) \tildeRvect_2^n.
\end{align*}
Since $t_\star < t_{\star \star}$, this equality and
\eqref{eq:limit:inD} imply that $\lim_n \tildeRvect_2^n = +\infty$.
Therefore, since $t_\star < t_{\star \star}$, we have $\lim_n
(\barD2^{-1} \tildeRvect^n)_{t_{\star \star}} = + \infty$. We then
conclude, along the same lines as in Subcase 2A, that $\lim_n
\mathcal{C}(\tildeRvect^n, \Ovect^n) = +\infty$. \\
        \noindent {\it Second statement.}  Let $\{(\tildeRvect^n, \Ovect^n), n \geq 0\}$
be a sequence in $\widetilde \Dset_{\Zvect}$ and $\tau$ such that
$\Z_\tau >0$.  By \eqref{eq:minoration:C:2}, we have
\begin{multline*}
\mathcal{C}(\tildeRvect^n, \Ovect^n) + \sum_{t \neq \tau} \ln (\Z_t!)
\\ \geq \mathcal{I}_\tau(\barD2^{-1} \tildeRvect^n, \Ovect^n) - \Z_\tau \ln
\mathcal{I}_\tau(\barD2^{-1} \tildeRvect^n, \Ovect^n).
\end{multline*}
The RHS tends to $+\infty$ since $\lim_n \mathcal{I}_\tau(\barD2^{-1}
\tildeRvect^n, \Ovect^n) = 0$ and $\Z_\tau>0$ by assumptions, hence,
$\lim_n \mathcal{C}(\tildeRvect^n, \Ovect^n) = + \infty$.
\end{proof}
\noindent {\bf $\blacktriangleright$ Conclusion: a MAP exists.}
Assume that there exist $t_\star < t_{\star \star}$ in $\{1, \ldots, T
\}$ such that $\Zphi_{t_\star} > 0$ and $\Zphi_{t_{\star\star}} >
0$. Then $- \ln \pi$ possesses at least one minimizer in
$\Dset_{\Zvect}$.
\begin{proof}
Consider the point $(\Rvect^0, \Ovect^0)$ given by $\Rvect^0 \eqdef
(1, \ldots, 1)^\top$ and $\Ovect^0 \eqdef (1,\ldots, 1)^\top$. Then,
$(\Rvect^0, \Ovect^0) \in \Dset_{\Zvect}$. Set $M^0 \eqdef - \ln
\pi(\Rvect^0, \Ovect^0)$. \\ The goal of the proof below is to build a
closed bounded set $\mathcal{K}$ in $\Dset_{\Zvect}$ such that outside
$\mathcal{K}$, $-\ln \pi \geq 1+M^0$. This implies that $(\Rvect^0,
\Ovect^0) \in \mathcal{K}$. Since $- \ln \pi$ is continuous on
$\Dset_{\Zvect}$, it reaches its minimum on the compact subset
$\mathcal{K}$, and this minimum is upper bounded by $M^0$. Hence, this
minimizer is also a global minimizer.  Let us define $\mathcal{K}$. We
have
 \[
\lambda_{\min} \|\tildeRvect \|^2 \leq \| \D2^{-1} \tildeRvect \|^2 =
\|\Rvect\|^2 \leq \lambda_{\max} \|\tildeRvect \|^2
  \]
where $\lambda_{\min}$ (resp. $\lambda_{\max}$) is the minimal
(resp. maximal) eigenvalue of $\barD2^{-\top} \barD2^{-1}$; they are
positive and finite. Consequently, by setting $\tildeRvect^n \eqdef
\barD2 \Rvect^n$, we have $\|(\tildeRvect^n, \Ovect^n)\|_\ell \to +
\infty$ iff $\|(\Rvect^n, \Ovect^n)\|_\ell \to + \infty$ for $\ell =
1,2$ since the norms are equivalent on $\rset^{2T}$. This property,
the coercivity property (statement 1), and the equality
\eqref{eq:fromCtopi} imply that $\lim_n - \ln \pi(\Rvect^n, \Ovect^n)
= + \infty$ for any $\Dset_{\Zvect}$-valued sequence $\{(\Rvect^n,
\Ovect^n), n \geq 0\}$ such that $\lim_n \| \Rvect^n \| + \lim_n \|
\Ovect\|^n = +\infty$. As a consequence, there exists $C_{1+M^0}$ such
that
\begin{multline*}
(\Rvect, \Ovect) \in \Dset_{\Zvect}, \| \Rvect\| + \| \Ovect\| >
  C_{1+M_0} \\ \Longrightarrow - \ln \pi(\Rvect, \Ovect) \geq 1+M^0.
\end{multline*}
Similarly, there exists $c_{1+M_0}>0$ such that
\begin{multline*}
(\Rvect, \Ovect) \in \Dset_{\Zvect}, \Rvect_t \Zphi_t+ \Ovect_t <
  c_{1+M^0} \ \text{for some $t$ s.t. $\Z_t >0$} \\ \\ \Longrightarrow
  - \ln \pi(\Rvect, \Ovect ) \geq 1+M^0.
\end{multline*}
Consequently, we define
\begin{multline*}
\mathcal{K} \eqdef \Dset_{\Zvect} \cap \{\paramvect: \| \Rvect\| + \|
\Ovect\| \leq C_{1+M_0} \} \\ \cap \{\paramvect: \Rvect_t \Zphi_t+
\Ovect_t \geq c_{1+M^0} \ \text{for $t$ s.t. $\Z_t >0$} \}.
\end{multline*}
\end{proof}
\vspace{-0.9cm}
\subsection{About the uniqueness of the MAP}
\def\sign{\mathrm{sign}} We just proved that there exists a compact
subset of the interior of $\Dset_\Zvect$ that contains a minimizer of
$-\ln \pi$. Let $\paramvect^\star = (\Rvect^\star, \Ovect^\star)$ be a
minimizer. \\
\noindent {\bf $\bullet$ One or uncountably many.} The function $-\ln
\pi$ is convex and finite on a convex set: hence, given a second
minimizer $\paramvect^{\star \star}$, $ \mu \paramvect^\star +(1-\mu)
\paramvect^{\star \star}$ is also a minimizer, whatever $\mu \in
\ccint{0,1}$. \\ \noindent {\bf $\bullet$ Same intensity, data
  fidelity term, and penalty term. } Let $f_\Zvect$ and $g$ be defined
by \eqref{def:fandg:g}, \eqref{def:fandg:f}. Following the same lines as in \cite{Pascal2021}
where the strict convexity of the Kullback-Leibler term $f_\Zvect$ is
the key ingredient, it can be proved that $\mathcal{I}_t(\paramvect^\star) =
\mathcal{I}_t(\paramvect^{\star \star})$ for any $t \in \{1, \ldots, T\}$ and
thus $f_\Zvect(\paramvect^\star) = f_\Zvect(\paramvect^{\star
  \star})$; since $-\ln \pi(\paramvect^\star) = -\ln
\pi(\paramvect^{\star \star})$ since both points minimize $-\ln \pi$,
we have $g(\paramvect^\star) = g(\paramvect^{\star \star})$. \\
\noindent {\bf $\bullet$ Sign conditions.}
Set $\sign(a)= 1$ when $a>0$, $\sign(a) =-1$ when $a<0$ and $\sign(a)=
0$ when $a=0$. For $A,B$ in $\rset$, and $\mu >0$, we have $|A + \mu
B| = |A| + \mu \sign(A) \, \sign(B) |B| \1_{A \neq 0} + \mu |B| \1_{A=
  0} + \mu o(1)$ where $o(1)$ is a function satisfying $\lim_{\mu \to
  0} o(1)= 0$. Hence, for ${\boldsymbol \tau} = {\boldsymbol \tau}_{1:d},
{\boldsymbol \tau'}={\boldsymbol \tau}'_{1:d} \in \rset^d$ and $\mu \in
\ooint{0,1}$,
\begin{align} \label{eq:DL1:norm1}
\| {\boldsymbol \tau} &+ \mu ({\boldsymbol \tau}' - {\boldsymbol
  \tau})\|_1 = (1-\mu) \, \|{\boldsymbol \tau} \|_1 + \mu \|
\boldsymbol \tau' \|_1 \nonumber \\ &+ \mu \sum_{t=1}^d \left( \sign(\tau_t)
\sign(\tau_t') -1 \right) |\tau_t'| \1_{\tau_t \neq 0} + \mu \, o(1).
\end{align}
Set $\paramvect^\mu \eqdef \paramvect^\star + \mu(\paramvect^{\star
  \star} - \paramvect^\star)$; as proved above,
$f_\Zvect(\paramvect^\mu) = f_\Zvect(\paramvect^\star)$. We prove that
if the sign conditions do not hold, for $\mu$ small enough
$g(\paramvect^\mu) < g(\paramvect^\star)$ which yields a contradiction
since $\paramvect^\star$ is a minimizer.  By \eqref{eq:DL1:norm1}, we
obtain
\begin{align*}
 & \| \Ovect^\star \!+\! \mu ( \Ovect^{\star \star} \!-\! 
  \Ovect^\star) \|_1 \!-\! (1\!-\!\mu) \| \Ovect^\star \|_1 \!-\!
  \mu \|\Ovect^{\star \star} \|_1 \\ & = \mu \sum_{t=1}^T \left(
  \sign(\O^\star_t) \sign(\O_t^{\star \star}) -1 \right) |\O_t^{\star
    \star}| \1_{\O_t^{\star} \neq 0} + \mu \, o(1).
\end{align*}
We have a similar expansion for $\|\D2 \Rvect^\star + \mu(\D2
\Rvect^{\star \star} - \D2 \Rvect^\star) \|_1$. Since
$g(\paramvect^\star) = g(\paramvect^{\star \star})$, this yields
\begin{align*}
& g(\paramvect^\mu) - g(\paramvect^\star) = \mu \sum_{t=1}^{T-2} |(\D2
  \Rvect^{\star \star})_t| \ldots \\ & \times \left( \sign((\D2
  \Rvect^\star)_t) \ \sign( (\D2 \Rvect^{\star \star})_t) -1 \right)
  \1_{(\D2 \Rvect^{\star})t \neq 0} \\ & + \mu \sum_{t=1}^T \left(
  \sign(\O^\star_t) \ \sign(\O_t^{\star \star}) -1 \right)
  |\O_t^{\star \star}| \1_{\O_t^{\star} \neq 0} + \mu \, o(1).
\end{align*}
For $\mu$ small enough, the RHS is negative when for some $t$, $
\sign(\O^\star_t) \sign(\O_t^{\star \star}) = -1$ or when for some
$s$, $\sign((\D2 \Rvect^\star)_s) \sign( (\D2 \Rvect^{\star \star})_s)
= -1$. If such, $g(\paramvect^\mu) <
g(\paramvect^\star)$. \\ \noindent {\bf $\bullet$ Sufficient
  conditions for uniqueness.}  The proof is adapted from \cite[Section
  4]{ali:tibshirani:2019}. Let $\paramvect^{\star \star} =
\paramvect^\star + {\boldsymbol \omega}$ be another minimizer. Define
\[
\U\eqdef \left[\begin{matrix} \lambdatime \D2 & \0mat_{(T-2)
      \times T } \\ \0mat_{T \times T} & \lambdaO \Id_T \end{matrix}
  \right] \in \rset^{(2T-2) \times (2T)}.
    \]
The Fermat rule (\cite[Theorem 16.2]{Bauschke:2011ta}) which
characterizes optimality implies that zero is in the subdifferential
of $-\ln \pi$ at $\paramvect^\star$: there exists
$\modif{\boldsymbol{\gamma}}(\paramvect^\star) \in \rset^{2T-2}$ such that $\nabla
f_\Zvect(\paramvect^\star) + \U^\top \modif{\boldsymbol{\gamma}}(\paramvect^\star) = \0mat_{2T
  \times 1}$ where $\modif{\boldsymbol{\gamma}}(\paramvect^\star)$ is the subgradient of
the $L^1$-norm in $\rset^{2T-2}$ evaluated at $\U \paramvect^\star \in
\rset^{2T-2}$. Since $\nabla f_\Zvect(\paramvect^\star)$ depends on
$\paramvect^\star$ through the $\mathcal{I}_t$'s which are constant on the set
of the minimizers $\mathcal{M}$ (see above), and $\U^\top$ has full
column rank, $\modif{\boldsymbol{\gamma}}(\paramvect^\star)$ is the same whatever the
minimizer $\paramvect^\star$; it is denoted by $\modif{\boldsymbol{\gamma}}^\star$.  Set
$\mathcal{I} \eqdef \{j \in \{1, \ldots, 2T-2 \}: |\gamma^\star_j| < 1
\}$. Observe that any minimizer is in the kernel $\modif{\boldsymbol{\mathsf{K}}}_1$ of the
matrix $\U_{\mathcal{I}}$ which, by definition, collects the
rows of $\U$ indexed by $\mathcal{I}$; hence ${\boldsymbol \omega} \in
\modif{\boldsymbol{\mathsf{K}}}_1$. In addition, ${\boldsymbol \omega}$ is in the kernel
$\modif{\boldsymbol{\mathsf{K}}}_2$ of the $T \times (2T)$ matrix $[\mathsf{diag}(\Zphi_1,
  \ldots \Zphi_T) \, \Id_T]$, since all the $\mathcal{I}_t$'s are constant on
$\mathcal{M}$. Therefore, if $\modif{\boldsymbol{\mathsf{K}}}_1 \cap \modif{\boldsymbol{\mathsf{K}}}_2 = \{\0mat
\}$, the MAP is unique.

\section{Proof\modif{s} of \autoref{sec:sampler}}
\label{proof:PGD}
\subsection{Detailed proof of \autoref{theo:driftdual}}
Throughout the proof, we write $\A$, $\U$ and $\barA$ as a shorthand
notation for $\A_{i,j}$, $\U_{i,j}$ and $\barA_{i,j}$.  Under the
stated assumptions,
\begin{equation} \label{eq:def:invbarA}
\barA^{-1} = \begin{bmatrix} \U^\top (\U \U^\top)^{-1} & \A^\top
  (\A \A^\top)^{-1} \end{bmatrix}.
\end{equation}
We first focus on the gradient step in \eqref{eq:driftPGdual0} leading
to:
\[
\barA \paramvect_j - \pas_j \barA^{-\top} \nabla_j f(\paramvect) = \begin{bmatrix} \U \paramvect_j - \pas_j (\U \U^\top)^{-1} \U \nabla_j f(\paramvect) \\
  \A \paramvect_j - \pas_j (\A \A^\top)^{-1} \A\nabla_j f(\paramvect) \end{bmatrix}.
\]
Second, for any $\boldsymbol{\tau} = \boldsymbol{\tau}_{1:d_j} \in \rset^{d_j}$,
\begin{align}
\prox_{\pas_j \bar{g}_{i,j}}(\boldsymbol{\tau}) &= \prox_{\pas_j g_{i,j}(\A \barA^{-1} \cdot)}(\boldsymbol{\tau}) \\
&= \begin{bmatrix}
  \boldsymbol{\tau}_{1:d_j-c_{i,j}} \\ \prox_{\pas_j
    g_{i,j}}({\boldsymbol{\tau}}_{d_j-c_{i,j}+1:d_j}) \end{bmatrix},
\end{align}
 since under the stated assumptions, we have
\[
\A \barA^{-1} = \begin{bmatrix} \0mat_{c_{i,j} \times (d_j -c_{i,j})} & \Id_{c_{i,j}}\end{bmatrix}.
\]
Therefore,
\begin{align*}
  & \prox_{ \pas_j \, \bar{g}_{i,j}}\left( \barA \paramvect_j - \pas_j \barA^{-\top} \nabla_j f(\paramvect) \right) \\
  &  = \begin{bmatrix} \U \paramvect_j - \pas_j (\U \U^\top)^{-1} \U \nabla_j f(\paramvect) \\
    \prox_{ \pas_j \, g_{i,j}} \left(  \A \paramvect_j - \pas_j (\A \A^\top)^{-1} \A \nabla_j
      f(\paramvect)\right) \end{bmatrix}.
\end{align*}
Now, let us apply $\barA^{-1}$; by \eqref{eq:def:invbarA}, we have 
\begin{align*}
  & \barA^{-1} \, \prox_{ \pas_j \, g_{i,j}(\A \barA^{-1} \cdot)}\left( \barA \paramvect_j - \pas_j \barA^{-\top} \nabla_j f(\paramvect) \right) \\
  & =   \U^\top (\U \U^\top)^{-1} \left( \U \paramvect_j - \pas_j  (\U \U^\top)^{-1} \U \nabla_j f(\paramvect) \right)  \\
  & +  \A^\top (\A \A^\top)^{-1}  \prox_{ \pas_j \, g_{i,j}} \left(  \A \paramvect_j - \pas_j (\A \A^\top)^{-1} \A \nabla_j
    f(\paramvect)\right).
\end{align*}
Finally, since $\barA^{-1}  \barA= \Id_{d_j}$, we have 
\[
\U^\top (\U \U^\top)^{-1} \U + \A^\top (\A \A^\top)^{-1} \A =
\Id_{d_j}
\]
and this concludes the proof of \eqref{eq:pgdual-pgd}. \\ When $\A
\A^\top = \Id_{c_{i,j}}$, we have $\widetilde{\modif{\boldsymbol{\Omega}}}_{i,j} = \A^\top
\A$ and $\A \widetilde{\modif{\boldsymbol{\Omega}}}_{i,j} = \A$.  When $\U \U^\top =
\Id_{d_j-c_{i,j}}$, we have $\modif{\boldsymbol{\Omega}}_{i,j} = \U \U^\top$ and $ \Id_{d_j} -
\modif{\boldsymbol{\Pi}}_{i,j} = \U^\top \U$; this yields
\[
(\Id_{d_j} - \modif{\boldsymbol{\Pi}}_{i,j}) \modif{\boldsymbol{\Omega}}_{i,j} = \U^\top \U \U^\top \U =
\U^\top \U = \Id_{d_j} - \modif{\boldsymbol{\Pi}}_{i,j},
\]
leading to the final result of Theorem~\ref{theo:driftdual}.

\subsection{Detailed proof of \Cref{prop:MCMCsampler}}
\renewcommand{\thealgocf}{4}
Let a finite set $\Sset$ of indices. For any $\paramvect \in \Dset$,
let $\{\rho_\iota(\paramvect), \iota \in \Sset \}$ be a weight
function: $\sum_{\iota \in \Sset} \rho_\iota(\paramvect) =1$ and
$\rho_\iota(\paramvect) \geq 0$.  Finally, for any $\iota \in \Sset$,
let $q_\iota(\paramvect, \paramvect') \rmd \param'$ be a Markov
transition with respect to the Lebesgue measure on $\rset^d$. {\tt
  PGdec} and {\tt PGdual} are special instances of
\autoref{algo:generalPGD}. \begin{algorithm} \KwData{$N_{\mathrm{max}}
    \in \nset_\star$, $\paramvect^0 \in \Dset$} \KwResult{A
    $\Dset$-valued sequence $\{\paramvect^n, n \in [N_{\mathrm{max}}]
    \}$} \For{$n=0, \ldots, N_{\mathrm{max}}-1$}{ Sample $\iota \in
    \Sset$ with distribution $\{\rho_i(\paramvect^{n}), i \in \Sset
    \}$ \; Draw $\paramvect^{n+1/2} \sim q_\iota(\paramvect^{n},
    \cdot)$ \; Set $\paramvect^{n+1} = \paramvect^{n+1/2}$ with
    probability $\alpha_\iota(\paramvect^n, \paramvect^{n+1/2})$
    \begin{equation*} \label{eq:ARratio:GPGD}
\alpha_\iota(x, y) \eqdef 1 \wedge \frac{\pi(y)}{\pi(x)}
\frac{\rho_\iota(y)}{\rho_\iota(x)} \frac{q_\iota(y, x)}{q_\iota(x,y)}
    \end{equation*}
    and $\paramvect^{n+1} = \paramvect^n$ otherwise.  
  }
  \caption{General Blockwise Metropolis-Hastings. \label{algo:generalPGD}}
\end{algorithm} 
They correspond to the case $\Sset \eqdef \{ (i_1, \ldots, i_J), i_j
\in \{1, \ldots, I_j\} \}$; $\rho_\iota(\paramvect) = 1/(I_1 I_2
\ldots I_J)$ for any $\paramvect$; and to
\[
q_\iota(\paramvect, \paramvect') = \prod_{j=1}^J q_{i_j,j}(\paramvect,
\paramvect_j'), \quad \paramvect \in \Dset, \paramvect' \in \Dset
\]
where $\iota=(i_1, \ldots, i_J)$ and $\paramvect' = (\paramvect'_1,
\ldots, \paramvect'_J)$. \\
\noindent {\bf Claim1. }
Assume: {\bf [B1]} for any $\paramvect,\paramvect' \in \Dset$, there
exists $\iota \in \Sset$ such that $\rho_\iota(\paramvect)
q_\iota(\paramvect,\paramvect') \wedge \rho_\iota(\paramvect')
q_\iota(\paramvect',\paramvect)>0$; {\bf [B2]} $\pi$ is continuous on
$\Dset$; {\bf [B3]} for any compact set $K$ of $\Dset$, $\inf_{K
  \times K} \sum_{\iota \in \Sset} \rho_\iota q_\iota >0$.  Then the
sequence $\{\paramvect^n, n \geq 0 \}$ obtained by
\autoref{algo:generalPGD} is a Markov chain, taking values in
$\Dset$. It is $\phi$-irreducible, strongly aperiodic and $\pi$ is its
unique invariant distribution.
\begin{proof}
{\it   $\bullet$ $\paramvect^n \in \Dset$ for any $n$.} The proof is by induction
on $n$. This property holds true for $n=0$. Assume that $\paramvect^n
\in \Dset$. If $\paramvect^{n+1/2} \notin \Dset$, then
$\pi(\paramvect^{n+1/2}) =0$ and $\alpha_i(\paramvect^n,
\paramvect^{n+1/2}) = 0$, so that $\paramvect^{n+1} = \paramvect^n$
and $\paramvect^{n+1}$ is in $\Dset$. This concludes the
induction. \\ {\it   $\bullet$ $\pi$ is an invariant probability measure.}
Conditionally to $\iota$ and $\paramvect^n$, the distribution of
$\paramvect^{n+1}$ is
\begin{multline*}
  P_\iota(\paramvect^n, \rmd \paramvect') \eqdef \delta_{\paramvect^n}(\rmd \paramvect') \left(1 - \int_{\rset^d} \alpha_\iota (\paramvect^n, \tau) q_\iota(\paramvect^n, \tau) \rmd \tau  \right) \\
  + \alpha_\iota (\paramvect^n, \paramvect') q_\iota(\paramvect^n, \paramvect')
  \rmd \paramvect' \eqsp;
\end{multline*}
$\delta_x(\rmd \paramvect')$ denotes the Dirac mass at $x$.
Conditionally to $\paramvect^n$, the distribution of $\iota$ is
$\{\rho_i(\paramvect^n), i \in \Sset \}$. Hence the conditional
distribution of $\paramvect^{n+1}$ given $\paramvect^n$ is
\[
P_\star(\paramvect^n, \rmd \paramvect') \eqdef \sum_{i \in \Sset} \rho_i(\paramvect^n) P_i(\paramvect^n, \rmd \paramvect')
\eqsp.
\]
Following the sames lines as in \cite[Theorem
  7.2]{robert:casella:2005} (details are omitted), the {\em detailed
  balance condition} with $\pi$ can be established
\begin{multline*}
\pi(\paramvect) \sum_{i \in \Sset} \rho_i(\paramvect)
\alpha_i(\paramvect, \paramvect') q_i(\paramvect, \paramvect') \\
= \pi(\paramvect') \sum_{i
  \in \Sset} \rho_i(\paramvect') \alpha_i(\paramvect', \paramvect)
q_i(\paramvect', \paramvect) \eqsp;
\end{multline*}
hence $\pi$ is invariant for $P_\star$.  \\ {\it $\bullet$
  Irreducibility.} By [B1], the chain is $\phi$-irreducible (see
\cite[Lemma 1.1.]{mengersen:tweedie:1996}). \\ {\it $\bullet$
  Aperiodicity.} Let us prove that the compact sets are $1$-small and
the chain is aperiodic; the proof is on the same lines as the proof of
\cite[Lemma 1.2.]{mengersen:tweedie:1996}. Let $K$ be a compact set in
$\Dset$. Since $\pi < \infty$ on $\Dset$ then $\sup_K \pi < \infty$
by [B2]. For any measurable set $A \subseteq K$ and any $\paramvect
\in K$, it holds
\begin{align*}
P_\star(\paramvect,A) & \geq \sum_{\iota \in \Sset} \rho_\iota(\paramvect)
\int_A q_\iota(\paramvect, \paramvect') \alpha_\iota(\paramvect, \paramvect') \rmd
\paramvect' \\ & \geq \int_A \sum_{\iota \in \Sset}
\frac{\rho_\iota(\paramvect) q_\iota(\paramvect, \paramvect') }{\pi(\paramvect')}
\wedge \frac{\rho_\iota(\paramvect') q_\iota(\paramvect', \paramvect)
}{\pi(\paramvect)} \pi(\paramvect') \rmd \paramvect' \\ & \geq \frac{\inf_{K
    \times K} \sum_{\iota \in \Sset} \rho_\iota q_\iota }{ \sup_K \pi}
\int_A \pi(\paramvect') \rmd \paramvect'.
  \end{align*}
The RHS is positive by [B3] and this proves that $K$ is $1$-small and
the chain is aperiodic. \\ {\it $\bullet$ Unique invariant probability
  distribution.}  Finally, \cite[Propositions 10.1.1. and
  10.4.4]{meyn:tweedie:2009} prove that $\pi$ is the unique invariant
distribution.
\end{proof}
\noindent {\bf Claim 2.} Both {\tt PGdec} and {\tt PGdual} satisfy
    [B1,B2,B3]. The {\tt PGdec} and {\tt PGdual} chains are positive
    Harris-recurrent Markov chains: they satisfy a strong law of large
    numbers for any initial value in $\Dset$.
\begin{proof}
{\it $\bullet$ Both algorithms satisfy [B1].} For both algorithms,
$\rho_\iota(\paramvect) =1 /(I_1 \cdots I_J)$ and $q_\iota(\paramvect,
\paramvect')$ is proportional to 
  \[
 \prod_{j=1}^J \exp\left( - 0.5 (\paramvect'_j -
 \stepalgo_{i_j,j}(\paramvect))^\top \C_{i_j,j}^{-1} (\paramvect'_j -
 \stepalgo_{i_j,j}(\paramvect)) \right)
  \]
where $\iota = (i_1, \cdots, i_J)$ and $\stepalgo$ is
$\stepalgo^{\mathrm{PGdec}}$ or
$\stepalgo^{\mathrm{PGdual}}$. Therefore, since $\stepalgo_{i,j}(\tau) <
\infty$ for any $\tau \in \Dset$, we have $q_\iota(\paramvect,
\paramvect') \wedge q_\iota(\paramvect', \paramvect) > 0$ for any
$\iota \in \Sset$ and $\paramvect, \paramvect' \in \Dset$. \\ {\it
  $\bullet$ Both algorithms satisfy [B3].} For any compact set $K$ of
$\Dset$, we have $\sup_K \|\stepalgo_{i,j} \|< \infty$; in addition, $\stepalgo$
is a continuous function on $\Dset$ (the function $f$ is continuously
differentiable and the proximal operator is continuous by
\cite[Proposition 12.28]{Bauschke:2011ta}. Hence $\inf_{K \times K}
q_\iota>0$ and [B3] holds. \\ {\it $\bullet$ Positive Harris
  recurrence.} From Claim 1, the {\tt PGdec} Markov chain and the {\tt
  PGdual} one are positive recurrent (they are $\phi$-irreducible with
an invariant distribution, and recurrent by \cite[ Proposition
  10.4.4]{meyn:tweedie:2009}). Following the same lines as in
\cite[Theorem 8]{roberts:rosenthal:2006}, we prove that the chain is
Harris recurrent by showing that for any measurable set $A$ such that
$\int_A \pi(\paramvect) \rmd \paramvect =1$ and any $\paramvect \in
\Dset$, $\PP_\paramvect(\tau_A < \infty)=1$ where $\tau_A$ is the
return-time to the set $A$ (\cite[Theorem
  6(v)]{roberts:rosenthal:2006}); here $\PP_\paramvect$ denotes the
probability on the canonical space of the Markov chain with initial
distribution the Dirac mass at $\paramvect$ and with kernel
$P_\star$. Let $A$ be a measurable subset of $\Dset$ such that $\int_A
\pi(\paramvect) \rmd \paramvect =1$. Let $\paramvect \in \Dset$.  We
write the kernel $P_\star$ as follows
\[
P_\star(\paramvect, A) = (1-r(\paramvect)) M(\paramvect,A) + r(\paramvect)
\delta_\paramvect(A),
\]
where $r(\paramvect) \eqdef 1 - \sum_{\iota \in \Sset}
\rho_\iota(\paramvect) \int_\Dset q_\iota(\paramvect, \paramvect')
\alpha_\iota(\paramvect, \paramvect') \rmd \paramvect'$, and
\[
M(\paramvect,A) \eqdef (1-r(\paramvect))^{-1} \ \sum_{\iota \in \Sset}
\rho_\iota(\paramvect) \int_A q_\iota(\paramvect, \paramvect')
\alpha_\iota(\paramvect, \paramvect') \rmd \paramvect'.
\]
Hence, $P_\star(\paramvect, \cdot)$ is a mixture of two distributions: a
Dirac mass at $\paramvect$ and $M(\paramvect, \cdot)$. Since $\int_{A^c}
\pi(\paramvect) \rmd \paramvect =0$ (here, $A^c \eqdef \Dset \setminus A$),
then the Lebesgue measure of $A^c$ is $0$. This implies that
$M(\paramvect,A^c)=0$ and $M(\paramvect, A) =1$. It holds
\begin{align*}
\PP_\paramvect(\tau_A =+ \infty) & = \PE_\paramvect[ \1_{X_1 \notin A}
  \PP_{X_1}( \tau_A = + \infty)] \\ & = \PE_\paramvect[ \1_{X_1 \in A^c}
  \PP_{X_1}( \tau_A = + \infty)] \\ &= r(\paramvect) \, \PP_\paramvect(\tau_A
=+ \infty);
\end{align*}
indeed, starting from $\paramvect$, the chain can not reach $A^c$ when
the kernel $M(\paramvect, \cdot)$ is selected; and remains at
$\paramvect$ when this kernel is not selected. Since $r(\paramvect)
<1$ (otherwise the chain can not be $\phi$-irreducible), we have
$\PP_\paramvect(\tau_A = +\infty) =0$. This concludes the
proof. \\ {\tt $\bullet$ Strong Law of Large numbers.} A positive
Harris recurrent chain satisfies a strong law of large numbers
whatever the initial value in $\Dset$ (\cite[Theorem
  17.0.1]{meyn:tweedie:2009}).
\end{proof}

\clearpage
\newpage
 
\ifCLASSOPTIONcaptionsoff
  \newpage
\fi



%



%
%
%
%
%
%


\newoutputstream{stream}
\openoutputfile{counters}{stream}
\addtostream{stream}{
  \protect\setcounter{equation}{\arabic{equation}}%
  \protect\setcounter{section}{\arabic{section}}
}
\closeoutputstream{stream}

\newoutputstream{stream2}
\openoutputfile{countersSection}{stream2}
\addtostream{stream2}{
  \protect\setcounter{section}{\arabic{section}}%
}
\closeoutputstream{stream2}

\end{document}